\newcommand{\dN}{\mathcal{N}}
\newcommand{\diag}{\text{diag}}
\newcommand{\E}{\mathbb{E}}
\newcommand{\kl}{\text{KL}}
\newcommand{\loss}{\mathcal{L}}
\newcommand{\tv}{\text{TV}}
\newcommand{\alg}{\mathcal{A}}
\newcommand{\errgen}{\text{err}_{gen}}
\newtheorem{theorem}{Theorem}
\newtheorem{assumption}[theorem]{Assumption}
\newtheorem{lemma}[theorem]{Lemma}
\newtheorem{corollary}[theorem]{Corollary}
\newtheorem{definition}[theorem]{Definition}
\DeclareMathOperator*{\argmin}{arg\,min}
\begin{document}

%

%


\title{Stability-based Generalization Bounds for Variational Inference}

\author{Yadi Wei and Roni Khardon \\ Department of Computer Science \\ Indiana University, Bloomington \\
             \tt{(weiyadi}$|$\tt{rkhardon)@iu.edu}}

\maketitle

\begin{abstract}
Variational inference (VI) is widely used for approximate inference in Bayesian machine learning. 
In addition to this practical success, 
generalization bounds for variational inference and related algorithms have been developed, mostly through the connection to PAC-Bayes analysis. 
A second line of work
has provided algorithm-specific generalization bounds 
through stability arguments or using mutual information bounds,
and has shown that the bounds are tight in practice,
but unfortunately these bounds do not directly apply to approximate Bayesian algorithms. 
This paper fills this gap by developing algorithm-specific stability based generalization bounds for a class of approximate Bayesian algorithms that includes VI, specifically when using 
stochastic gradient descent to optimize their objective.
As in the non-Bayesian case, 
the generalization error is bounded by
by expected parameter differences on a perturbed dataset.
The new approach complements PAC-Bayes analysis and can provide tighter bounds in some cases.  
An experimental illustration shows that the new approach yields non-vacuous bounds on 
modern neural network architectures and datasets
and that it can shed light on performance differences between variant approximate Bayesian algorithms. 
\end{abstract}

\section{Introduction}
Variational inference (VI \citep{Jordan99}) is one of the most successful approaches in approximate Bayesian machine learning
\citep[e.g.,][]{Blei2003,LimTeh2007,SeegerB12,KingmaW13,Johnson2016} 
and a significant amount of recent work is devoted to variational methods for deep networks
\citep[e.g.,][]{BNN, Practical-VI, DVI, lrt, collapsed-elbo}. 
Instead of calculating an exact posterior one computes an approximate posterior which minimizes 
the KL divergence between the approximation and the true posterior. 
VI is enabled computationally because minimizing the KL divergence is equivalent to maximizing the tractable 
evidence lower bound (ELBO). 
Thanks to this success several variations of VI have been proposed and following \cite{Alquier2016properties} recent work 
has developed finite sample generalization bounds using PAC Bayes analysis. 

This paper continues this effort but from a different perspective, motivated by 
bounds for 
stochastic gradient Langevin dynamics (SGLD)
using Bayes stability \citep{Li,banerjee}.
The idea of analysis through stability \citep{BousquetE02,HardtRS16} is that if an algorithm is not sensitive to perturbations of its input (i.e., the data) then one can bound the gap between its training and test errors. 
\citet{Li,banerjee} employed the KL divergence between output distributions with and without perturbation to assess this sensitivity.
We note that SGLD modifies the parameter $W$ of the learned model, adding noise in the process, but unlike Bayesian algorithms it does not learn a distribution on the parameters. 
Unfortunately, due to this difference, the approach of \citet{Li,banerjee} is not applicable to Bayesian algorithms. 

The paper builds on these ideas and provides a new analysis that 
establishes 
stability-based bounds for a family of approximate-inference algorithms for Bayesian neural networks (which includes VI), in particular when their inference objective is optimized using stochastic gradient descent (SGD). 
We develop two types of bounds: one for bounded loss functions and another for unbounded but Lipschitz loss functions. For bounded loss, we continue to use the KL divergence to measure sensitivity, whereas for Lipschitz loss, we employ the Wasserstein distance. Previous research \citep{ipm, wass-pac-bayes} has explored PAC-Bayes bounds using Wasserstein distance; here, we extend its application to Bayes stability. 
In both cases the generalization gap
can be upper-bounded by the expected parameter differences and further refined using techniques from \citet{HardtRS16} and \citet{SGD-stability2}, resulting in final bounds expressed as the sum of the gradient differences along the optimization trajectory.

\citet{ZhangBHRV17} demonstrated that it is possible to achieve near-zero training error on both true labels (leading to good test performance) and random labels (where test performance is random)
with the same network and training regime. Therefore, any meaningful generalization bound must distinguish between these scenarios, 
implying that it must be data-dependent. 

We provide an empirical demonstration confirming that our bounds can achieve this, effectively differentiating the successful case from overfitting. 
Further, our bounds produce non-vacuous results for generalization error of VI in practical situations, and 
effectively differentiate generalization performance when models are trained with or without data augmentation. 
We also use the bounds to explore the relationship of ELBO to direct loss minimization (DLM \citep{dlm-bnn}), a variant that has shown good performance in other models but fails for Bayesian neural networks, 
showing that the stronger stability of ELBO might explain this performance gap. Finally, a comparison to PAC-Bayes bounds shows that the stability approach provides tighter bounds in these scenarios and can therefore complement the strength of prior analysis.


In summary, this paper introduces a novel approach for analyzing the generalization performance of approximate Bayesian machine learning algorithms. Our contributions include a stability analysis of iterative update algorithms, the application of these bounds to variational Bayesian networks, and empirical demonstration of the practical utility of these bounds.

\section{Related Work}
There is a long tradition of analysis of asymptotic properties of Bayesian algorithms. 
\cite{Alquier2016properties} made an explicit connection between the Gibbs loss used in PAC-Bayes analysis and the objective of VI. This led to finite sample generalization bounds, i.e., bounds on the difference between training and true errors, that hold uniformly. In turn, algorithms that minimize the sum of training error plus generalization bound,
which have a form similar to VI with a regularization parameter,
are both well motivated and have strong theoretical guarantees. 
In followup work
\cite{Germain2016pac,Germain2019} have extended these results to richer classes, whereas \cite{Sheth2017} developed risk bounds, i.e., bounds that directly quantify the true error of VI. 
Other work suggested alternative optimization criteria diverging from VI by changing the loss or regularization components
\cite[e.g.,][]{black-box-alpha,knoblauch2019generalized,dlm-sgp} and generalization and risks bounds have been developed for some such algorithms 
\citep{Sheth2019,Germain2019,Masegosa20,pac-m}. 
However, these have not been demonstrated in practice.
\citet{nonvacuous} provided a non-vacuous bound for a binary classification task on MNIST. 
We evaluate these bounds and compare them to the stability bound
in our experiments in a multi-class classification task with large neural networks.

Another important line of work aims to analyze standard (non-Bayesian) algorithms, where capacity arguments can be used to yield generalization bounds for neural networks \citep[e.g.,][]{nn-rademacher}.
Recent work has developed an alternative approach that provides tighter bounds which are data-dependent and algorithm-dependent. This includes work using stability 
\citep{Li,banerjee}
and analysis that works through bounds on mutual information
\citep{NegreaHDK019,HaghifamNK0D20}. 
This has been specifically developed for SGLD, and extensions to SGD \citep{Neu21} are possible only as an approximation of SGLD.
While the approaches differs in technical details, the outcome is similar in that a generalization bound is obtained which can be expressed as a sum over training steps, of some function of the gradients.
Specifically the bound of \cite{Li} includes a sum of gradient norms whereas the bound of \cite{banerjee} includes a sum of the norms of gradient differences, which was found to be tighter in practice. 
As mentioned above,  
SGLD learns the parameter $W$ of the model and adds some noise duirng the optimization, hence it produces a sample from some distribution over parameters. 
This differs from Bayesian algorithms that explicitly generate distributions over parameters as their posteriors, and aggregate their predictions, and unfortunately the analysis does not carry through to this case. 

In contrast, we directly analyze iterative update Bayesian algorithms, for example, using SGD for variational inference (VI), without noise injection. The primary challenge is that the distribution of the parameters of VI is intractable, making it difficult to apply the chain rule of divergence (as in Lemma 10 of \citet{Li}). 
We provide an alternative analysis that first externalizes all sources of randomness of the algorithm, and then uses convexity to derive the bounds. 
This 
allows us to bound the stability gap in terms of parameter differences.
With this in place we can
follow the approach used to prove the stability of SGD \citep{HardtRS16, SGD-stability2} to bound parameter differences and obtain the desired result.  
Moreover, we extend the original Bayes stability argument, which previously applied only to bounded loss functions \citep{Li} or loss functions with bounded second moments \citep{banerjee}. We generalize this framework to Lipschitz continuous loss functions, allowing us to bound the generalization error using Wasserstein distances, which can be bounded using parameter differences. This extension is inspired by \citet{ipm, wass-pac-bayes}, which employ Wasserstein distances in PAC-Bayes bounds.

Finally,
while the discussion in the paper emphasizes the analysis of VI, 
the analysis and bounds are applicable to any iterative update 
approximate Bayesian 
algorithm that updates parameters of the approximate posterior, where the sensitivity of parameter updates can be easily calculated.
Hence it captures more cases than prior work, as illustrated by the application to DLM.

\section{Preliminaries}
Consider a model with parameters denoted as $w\in \mathbb{R}^d$. Given a prior distribution $p(w)$ and a dataset $S=(z_1, \dots, z_n)$ of size $n$, our goal is to determine the posterior distribution $p(w \mid S)$, which is computationally challenging in most cases. Variational inference offers a solution by seeking a distribution $Q(w)$ from a specified family of distributions, denoted as $\mathcal{Q}$, that minimizes the Kullback-Leibler (KL) divergence between $Q(w)$ and the true posterior $p(w|S)$. 
\begin{align}
\label{eq:elbo}
&Q^*(w) \nonumber \\
=& \argmin_{Q\in \mathcal{Q}}\kl(Q(w) \Vert p(w \mid S)) \nonumber \\
=& \argmin_{Q\in \mathcal{Q}} \E_{Q(w)}[\log Q(w) - \log p(w, S)] + \log p(S) \nonumber \\
=&\argmin_{Q\in \mathcal{Q}} 
\frac{1}{n} 
\sum_{i=1}^n 
\E_{Q(w)}[-\log p(z_i|w)] + \frac{1}{n} \kl(Q, p). 
\end{align}
The maximization objective obtained by negating \eqref{eq:elbo} is known as the Evidence Lower Bound (ELBO).
The above optimization objective can be efficiently solved using common gradient-based techniques, such as stochastic gradient descent. Furthermore, various alternative objectives exist to approximate the (pseudo) posterior distributions, for example, Direct Loss Minimization (DLM, \citep{dlm-sgp}), which uses the the following objective, and which is discussed in our experiments:
\begin{align}
\label{eq:dlm}
    \frac{1}{n}\sum_{i=1}^n -\log \E_{Q(w)}[p(z_i \vert w)] + \frac{1}{n} \kl(Q, p).
\end{align}

Note that the optimization objective is a function of the distribution $Q(w)$ and let $\theta$ denote the parameters of $Q$. 
To facilitate the analysis across different objectives,
we denote the objective function as $F(\theta, S) = \frac{1}{n} \sum_{i=1}^n F(\theta, z_i)$, where the objective function is written as the average of the objective function with respect to individual data points.  
Notice that for the examples above $F$ includes the regularizer.
For example, in ELBO, $F(\theta, z) = \E_{Q(w)}[-\log p(z|w)] + \frac{1}{n} \kl (Q, p)$. 

Let $\loss(w, z)$ be a loss function for parameter $w$ on a data point $z$ (notice that the loss can be different from the objective function $F$). Define $\loss(w, \mathcal{D}) = \E_{z \sim D} [\loss(w, z)]$ as the expected loss over a distribution $\mathcal{D}$, and $\loss(w, S) = \frac{1}{n} \sum_i \loss(w, z_i)$ as the empirical loss on a dataset $S$. Then the generalization error of the algorithm $\alg$ (which chooses $Q$ based on $S$), i.e., the gap between true and training set error, is given by:
\begin{align}
    \errgen(\alg) &= \E_{S\sim \mathcal{D}^n} \E_{w\sim Q} [\loss(w, \mathcal{D}) - \loss(w, S)].
\end{align}

\section{Generalization Bounds through Bayes Stability}
\label{sec:stability}
Consider a Bayesian algorithm, denoted by $\mathcal{A}$, designed to learn the posterior distribution over a parameter $w$ by optimizing an objective function $F(\theta, S)$. In some cases, there is inherent randomness in evaluating the objective and its gradients or in the optimization process, such as when the reparametrization trick \citep{KingmaW13} is used to approximate expectation terms (as in \cref{eq:elbo} and \cref{eq:dlm}) or when mini-batches are employed. We represent all sources of randomness by $\epsilon$. Consequently, the gradient of the objective becomes $\nabla F(\theta, S, \epsilon)$ for the entire dataset and $\nabla F(\theta, z, \epsilon)$ for an individual data point $z$.
Given a training dataset $S$ and the randomness $\epsilon$, the algorithm $\mathcal{A}$ deterministically produces a posterior distribution $Q_{S, \epsilon}$ for the parameter $w$. We define $Q_S$ as the expected posterior distribution, obtained by averaging over all possible randomness, i.e., $Q_S = \mathbb{E}_{\epsilon}[Q_{S, \epsilon}]$. Additionally, we assume that, when $\epsilon$ is integrated out, $\mathcal{A}$ is order-independent.

\begin{assumption}[Order-independent]
    For any fixed dataset $S=(z_1, \dots, z_n)$ and any permutation $p$, $Q_S = Q_{S^p}$, where $S^p$ is the dataset under permutation $p$.
\end{assumption}
This assumption can be easily satisfied by letting the learning algorithm randomly permute the training data at the beginning. Additionally, it is straightforward to show that variational inference using stochastic gradient descent (SGD) satisfies this condition.

We proceed, following the work of \citet{Li}, to define the single-point posterior distribution $Q_z = \E_{(z_1, \dots, z_{n-1})} [Q_{(z_1, \dots, z_{n-1}, z)}] = \E_{\epsilon, (z_1, \dots, z_{n-1})}[Q_{(z_1, \dots, z_{n-1}, z), \epsilon}]$, 
where we assume without loss of generality that $z$ is put at location $n$. 

\subsection{Bayes Stability}

The generalization error can be effectively bounded using a Bayes-stability argument, as exemplified in previous work by \citet{Li} and \citet{banerjee}. 
We develop two such bounds, one for bounded loss functions using TV distance and the other for unbounded but Lipschitz loss functions using Wasserstein distance. In both cases the result reduces to expected parameter differences. 

Let $\tv(p, q) = \frac{1}{2} \int \vert p(x) - q(x) \vert dx$ be the total variation distance between distributions. We have:

\begin{lemma}
\label{lemma:tv-convexity}
    $\tv\left(\E_{P(X)}[P(Y|X)], \E_{P(X)}[Q(Y|X)] \right) \leq \E_{P(X)}[\tv(P(Y|X), Q(Y|X))]$.
\end{lemma}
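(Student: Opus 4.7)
The plan is to unfold the definition of total variation distance and then apply the triangle inequality (equivalently, Jensen's inequality for the convex function $|\cdot|$) under the outer integral, followed by Fubini's theorem to swap the order of integration. Concretely, I start from
\begin{align*}
\tv\bigl(\E_{P(X)}[P(Y|X)], \E_{P(X)}[Q(Y|X)]\bigr)
= \tfrac{1}{2} \int \Bigl| \int P(x)\bigl(P(y|x) - Q(y|x)\bigr)\, dx \Bigr|\, dy .
\end{align*}

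Next, I would pull the absolute value inside the inner integral using the triangle inequality, which is the only nontrivial inequality in the argument. This gives
\begin{align*}
\tfrac{1}{2} \int \Bigl| \int P(x)\bigl(P(y|x) - Q(y|x)\bigr)\, dx \Bigr|\, dy
\leq \tfrac{1}{2} \int \int P(x)\, |P(y|x) - Q(y|x)|\, dx\, dy .
\end{align*}
Then Fubini's theorem (applicable since the integrand is nonnegative) lets me swap the order of integration and pull $P(x)$ out of the $y$-integral, recovering $\E_{P(X)}[\tv(P(Y|X), Q(Y|X))]$ on the right-hand side.

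There is no real obstacle here: the statement is essentially the joint convexity of total variation distance in its two arguments, specialized to the case where both arguments are mixtures with the same mixing measure $P(X)$. The only subtlety worth noting is the use of Fubini, which is justified immediately because the integrand $P(x)|P(y|x)-Q(y|x)|$ is nonnegative and measurable. The argument is self-contained and does not require any of the earlier machinery in the paper.
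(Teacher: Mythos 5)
Your proposal is correct and follows essentially the same route as the paper's proof: unfold the definition of $\tv$, pull the absolute value inside the inner integral, and swap the order of integration. The paper is slightly less explicit about invoking Fubini, but the argument is identical in substance.
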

\begin{proof}
    \begin{align*}
        & \tv(\E_{P(X)}P(Y|X), \E_{P(X)}Q(Y|X)) 
        \\ &= \frac{1}{2} \int \left|\int P(x) P(y|x)dx - \int P(x) Q(y|x) dx\right| dy
        \\ &= \frac{1}{2} \int \left|\int P(x) (P(y|x)- Q(y|x)) dx\right| dy 
        \\ &\leq \frac{1}{2} \int P(x) \int \left|P(y|x)- Q(y|x)\right| dy dx 
        \\ &= \E_{P(X)}[\tv(P(Y|X), Q(Y|X))].
    \end{align*}
\end{proof}

The following lemma 
adapts the ideas in the original proofs of \citet{Li, banerjee} to the context of Bayesian algorithms that output distributions over parameters. 

\begin{lemma}[Bayes-Stability 1]
\label{thm:errgen}
    Suppose the loss function $\loss(w, z)$ is $C$-bounded. Let $S$, $\Bar{S}$ denote two datasets that only differ at one element $z$ and $\Bar{z}$. The generalization error $\errgen(\alg)$ is upper bounded by $2C \E_{S, \Bar{S}, \epsilon} [\tv(Q_{S, \epsilon}, Q_{\Bar{S}, \epsilon})]$.
\end{lemma}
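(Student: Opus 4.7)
The plan is to use a symmetrization (ghost-sample) argument adapted to Bayesian algorithms, then convert the resulting pointwise discrepancy into a TV distance between the averaged posteriors $Q_S$ and $Q_{\bar S}$, and finally push the TV inside the $\epsilon$-expectation via \cref{lemma:tv-convexity} to obtain $Q_{S,\epsilon}$ and $Q_{\bar S,\epsilon}$.

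First I would rewrite the training error using the order-independence assumption: since $Q_S$ depends on $S$ only through its multiset, the quantity $\E_{S}\E_{w\sim Q_S}[\loss(w,z_i)]$ is the same for every $i$, so the training-error term collapses to $\E_S\E_{w\sim Q_S}[\loss(w,z_n)]$. Next I would introduce a ghost sample $\bar z\sim\mathcal{D}$ independent of $S$ and set $\bar S = (z_1,\dots,z_{n-1},\bar z)$. The test error equals $\E_{S,\bar z}\E_{w\sim Q_S}[\loss(w,\bar z)]$; exchanging the roles of $z_n$ and $\bar z$ in the $(n{+}1)$-fold i.i.d.\ tuple $(z_1,\dots,z_n,\bar z)$ rewrites this as $\E_{S,\bar z}\E_{w\sim Q_{\bar S}}[\loss(w,z_n)]$. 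Subtracting yields
\[
\errgen(\alg) \;=\; \E_{S,\bar z}\bigl[\E_{w\sim Q_{\bar S}}\loss(w,z_n) - \E_{w\sim Q_S}\loss(w,z_n)\bigr],
\]
which already exposes the dependence on a pair $(S,\bar S)$ of datasets differing in a single entry.

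The second step exploits $C$-boundedness of $\loss$: for any two distributions $P,Q$ and any function with $|f|\le C$ one has $|\E_P f-\E_Q f|\le 2C\,\tv(P,Q)$. Applying this inside the expectation with $P=Q_S$, $Q=Q_{\bar S}$, $f=\loss(\cdot,z_n)$ gives $\errgen(\alg)\le 2C\,\E_{S,\bar S}[\tv(Q_S,Q_{\bar S})]$. Finally, writing $Q_S=\E_\epsilon Q_{S,\epsilon}$ and $Q_{\bar S}=\E_\epsilon Q_{\bar S,\epsilon}$ and invoking \cref{lemma:tv-convexity} with the mixing variable taken to be $\epsilon$ produces $\tv(Q_S,Q_{\bar S})\le \E_\epsilon[\tv(Q_{S,\epsilon},Q_{\bar S,\epsilon})]$, matching the claimed bound exactly.

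The main obstacle is the symmetrization step: because the algorithm outputs a distribution over parameters rather than a point estimate, the usual ``swap $z_i$ with a ghost sample'' trick must be executed one level up, at the level of $Q_S$, and it is order-independence of the averaged posterior that makes the joint law invariant under swapping $z_n$ with $\bar z$. Once that identity is in place, the remaining conversion to TV and then to the per-randomness posteriors $Q_{S,\epsilon}$ is essentially mechanical.
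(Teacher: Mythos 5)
Your proof is correct and follows essentially the same route as the paper's: a symmetrization that swaps the ghost sample with the $n$-th training point (justified by order-independence), the bound $|\E_P f - \E_Q f| \le 2C\,\tv(P,Q)$ for $C$-bounded $f$, and finally \cref{lemma:tv-convexity} to move the TV inside the expectation over the algorithm's randomness. The only (cosmetic) difference is that the paper routes the argument through the single-point posteriors $Q_z, Q_{\bar z}$ and applies convexity jointly over $(S_{n-1},\epsilon)$, whereas you keep $S_{n-1}$ fixed and apply convexity over $\epsilon$ alone.
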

\begin{proof}
    It is clear that
     \begin{align}
         \E_{S} \E_{w \sim Q_S} [\loss(w, \mathcal{D})] = \E_{z\sim \mathcal{D}} \E_{w \sim Q_z} \E_{\Bar{z}\sim \mathcal{D}}[\loss(w, \Bar{z})] = \E_{\Bar{z}} \E_{Q_{\Bar{z}}} \E_{z} [\loss(w, z)],
     \end{align}
     and
     \begin{align}
         \E_S \E_{w\sim Q_S} \left[\frac{1}{n} \sum_{i=1}^n \loss(w, z_i) \right] &= \E_{z} \E_{Q_z} [\loss(w, z)].
     \end{align}
     Then the generalization error is 
     \begin{align}
         \errgen(\alg) &= \E_z \E_{\Bar{z}} \left[ \E_{w\sim Q_{\Bar{z}}}[\loss(w, z)] - \E_{w\sim Q_z}[\loss(w, z)]\right] \\
         &\leq \E_{z, \Bar{z}} \int |\loss(w, z)| \lvert Q_{\Bar{z}}(w) - Q_z(w)\rvert dw \\
         &\leq 2C \E_{z, \Bar{z}} [\tv(Q_z, Q_{\Bar{z}})] \\
         &= 2C \E_{z, \Bar{z}}[\tv(\E_{S_{n-1}, \epsilon}[Q_{S_{n-1} \cup \{z\}, \epsilon}], \E_{S_{n-1}, \epsilon}[Q_{S_{n-1} \cup \{\Bar{z}\}, \epsilon }])] \\
         &\leq 2C \E_{z, \Bar{z}, S_{n-1}, \epsilon}[\tv(Q_{S, \epsilon}, Q_{\Bar{S}, \epsilon})]
     \end{align}
      where the last inequality is due to \cref{lemma:tv-convexity}.
\end{proof}

There are two important differences form the argument structure in prior work \citep{Li, banerjee}. 
First, note that it is crucial that $\epsilon$ includes all sources of randomness in the algorithm. With this condition, $Q_{S, \epsilon}$ is a distribution in the family used by the algorithm and not a mixture of such distributions.
For example, when $Q(w)$ is a normal distribution $Q_{S, \epsilon}$ is a normal distribution, but $Q_{S}$ is a mixture of normal distributions where the mixture is taken over $\epsilon$. 
This allows us to directly bound the stability using parameter differences as in the next lemma. In contrast, the analysis of \citet{Li, banerjee}, 
that works with mixtures generated by the choice of batches in SGLD,
requires a fixed variance term (for all dimensions) and is not easily generalizable to the case of learned variances. 

The second difference is due to the structure of the probability model.
\citet{Li} 
use the sum of KL divergence along the optimization trajectory to upper bound the Bayes stability. In SGLD, the optimization trajectory \( W_1, W_2, \dots, W_T \) consists of samples from the distribution, with each \( W_i \) being drawn from a Gaussian distribution conditioned on both \( W_{i-1} \) and the batch. However, in variational inference, the optimization trajectory \( (\mu_1, \sigma_1), \dots, (\mu_T, \sigma_T) \) consists of distribution parameters, 
and the bound on the sequence of conditional KL divergences does not hold. 
A detailed explanation is provided in \cref{sec:Li-proof-not-for-vi}.

The next lemma shows that Bayes stability can be bounded through parameter differences:

\begin{lemma}
\label{cor:kl}
    Under the condition of \cref{thm:errgen}, if $Q_{S, \epsilon} = \dN(m, \diag(\sigma^2))$ and $Q_{\Bar{S}, \epsilon} = \dN(\Bar{m}, \diag(\Bar{\sigma}^2))$, the generalization error is upper bounded by 
    \begin{align}
    \label{eq:kl-bound}
        \frac{2C}{\sqrt{\sigma_0}} \sqrt{\E [\lVert \Bar{\sigma} - \sigma \rVert_1]} + \frac{C}{\sigma_0} \E \left[\lVert \Bar{\sigma} - \sigma \rVert_2 \right] + \frac{C}{\sigma_0} \E \left[\lVert \Bar{m} - m \rVert_2 \right],
    \end{align}
    where the expectation is taken over $S, \Bar{S}$ and $\epsilon$ and $\sigma_0$ is a preset lower bound of the standard deviation in $Q$.
\end{lemma}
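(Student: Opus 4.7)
The plan is to start from \cref{thm:errgen}, which already gives $\errgen(\alg) \leq 2C\,\E_{S,\Bar{S},\epsilon}[\tv(Q_{S,\epsilon}, Q_{\Bar{S},\epsilon})]$, and reduce the TV distance between two diagonal Gaussians $\dN(m,\diag(\sigma^2))$ and $\dN(\Bar{m},\diag(\Bar{\sigma}^2))$ to the parameter differences $\Bar{m}-m$ and $\Bar{\sigma}-\sigma$. The route is: Pinsker's inequality, then a careful coordinate-wise bound on the KL divergence, subadditivity of the square root, and finally Jensen's inequality.

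By Pinsker, $\tv \leq \sqrt{\kl/2}$, and for two diagonal Gaussians the KL has the closed form $\kl = \tfrac{1}{2}\sum_i\bigl[(m_i-\Bar{m}_i)^2/\Bar{\sigma}_i^2 + \sigma_i^2/\Bar{\sigma}_i^2 - 1 - \log(\sigma_i^2/\Bar{\sigma}_i^2)\bigr]$. The mean contribution is immediately bounded by $\|m - \Bar{m}\|_2^2/(2\sigma_0^2)$ using $\Bar{\sigma}_i \geq \sigma_0$.

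The crux is the per-coordinate variance summand $\psi(t) = t^2 - 1 - 2\log t$, evaluated at $t = \sigma_i/\Bar{\sigma}_i$. Since $\psi$ has a double zero at $t=1$, no purely linear-in-$|t-1|$ bound works; and since $\psi \to \infty$ as $t \to 0$, no purely quadratic bound works either. The natural remedy is the mixed inequality $\psi(\sigma_i/\Bar{\sigma}_i) \leq c_1 |\sigma_i - \Bar{\sigma}_i|/\sigma_0 + c_2 (\sigma_i - \Bar{\sigma}_i)^2/\sigma_0^2$, which I would prove by splitting into the cases $\sigma_i \geq \Bar{\sigma}_i$ and $\sigma_i < \Bar{\sigma}_i$. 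In each case one uses $\log x \leq x-1$ to linearize the log term, factors out $|\sigma_i - \Bar{\sigma}_i|$, and invokes $\sigma_i, \Bar{\sigma}_i \geq \sigma_0$ to replace the remaining reciprocals $1/\sigma_i$ and $1/\Bar{\sigma}_i^2$ by powers of $1/\sigma_0$. Summing over coordinates yields a KL bound of the form $\kl \leq c'_1\|\Bar{\sigma}-\sigma\|_1/\sigma_0 + c'_2\|\Bar{\sigma}-\sigma\|_2^2/\sigma_0^2 + c'_3\|\Bar{m}-m\|_2^2/\sigma_0^2$.

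To assemble the final bound I would substitute this KL estimate into Pinsker, apply $\sqrt{a+b+c} \leq \sqrt{a}+\sqrt{b}+\sqrt{c}$ to split the square root into three pieces, multiply by $2C$, and take the expectation over $S,\Bar{S},\epsilon$. The first piece has the form $(\text{const}/\sqrt{\sigma_0})\sqrt{\|\Bar{\sigma}-\sigma\|_1}$; Jensen's inequality $\E[\sqrt{X}]\leq \sqrt{\E X}$ then moves the expectation inside the square root, producing the first term of the stated bound. The other two pieces are linear in $\|\Bar{\sigma}-\sigma\|_2$ and $\|\Bar{m}-m\|_2$ and pass through the expectation unchanged, yielding the remaining two terms with denominator $\sigma_0$. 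The main obstacle is the mixed inequality on $\psi$: the fact that a linear-plus-quadratic KL bound in $\Bar{\sigma}-\sigma$ is needed (rather than a single monomial) is exactly what produces the unusual $1/\sqrt{\sigma_0}$ factor in the first term; everything past that step is routine.
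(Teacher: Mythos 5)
Your proposal is correct and follows essentially the same route as the paper's proof: Pinsker's inequality, the closed-form diagonal-Gaussian KL, a per-coordinate linear-plus-quadratic bound on the variance terms using $\sigma_i,\Bar{\sigma}_i\geq\sigma_0$, subadditivity of the square root, and Jensen's inequality on the $\lVert\Bar{\sigma}-\sigma\rVert_1$ piece. The only cosmetic differences are that you write the KL in the reverse direction (immaterial, since TV is symmetric) and package the variance summand as a single function $\psi$ where the paper bounds the log-ratio and ratio-of-variances terms separately.
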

\begin{proof}
    According to Pinsker's inequality, the total variation distance can be bounded by the KL divergence of the distributions. We thus first bound the KL divergence. 
     \begin{align}
         \kl(Q_{S, \epsilon}, Q_{\Bar{S}, \epsilon}) &= 1^\top (\log \sigma - \log \Bar{\sigma}) + \frac{1}{2} \left(1^\top \frac{\Bar{\sigma}^2}{\sigma^2} - d + 1^\top \frac{(\Bar{m} - m)^2}{\sigma^2} \right) \\
        &\leq \frac{2\lVert \sigma - \Bar{\sigma} \rVert_1}{\sigma_0} + \frac{\lVert \sigma - \Bar{\sigma} \rVert_2^2}{2\sigma_0^2} + \frac{\lVert \Bar{m} - m \rVert_2^2}{2\sigma_0^2},
        \label{eq:kl-form}
    \end{align}
    where $\sigma_0$ is a preset minimum value for the standard deviation, i.e., $\forall k, \sigma_k \geq \sigma_0$ and $\bar{\sigma}_k \geq \sigma_0$.
    To derive \cref{eq:kl-form},
    let $ \beta_i = |\sigma_i - \Bar{\sigma}_i |$. Consider $1^\top (\log \sigma - \log \Bar{\sigma}) = \sum_i \log \frac{\sigma_i}{\Bar{\sigma}_i}$. For each entry $i$, if $\sigma_i - \beta_i \leq \sigma_0$, then $\log \frac{\sigma_i}{\Bar{\sigma}_i} \leq \log \frac{\beta_i + \sigma_0}{\sigma_0} = \log (1 + \frac{\beta_i}{\sigma_0}) \leq \frac{\beta_i}{\sigma_0}$; if $\sigma_i - \beta_i > \sigma_0$, then $\log \frac{\sigma_i}{\Bar{\sigma}_i} \leq \log \frac{\sigma_i}{\sigma_i - \beta_i} = \log (1 + \frac{\beta_i}{\sigma_i - \beta_i}) \leq \log (1 + \frac{\beta_i}{\sigma_0}) \leq \frac{\beta_i}{\sigma_0}$. Overall, $1^\top (\log \sigma - \log \Bar{\sigma}) \leq \sum_i \frac{\beta_i}{\sigma_0} = \frac{\lVert \sigma - \Bar{\sigma}_0 \rVert_1}{\sigma_0}$. For $1^\top \frac{\Bar{\sigma}^2}{\sigma^2} = \sum_i \frac{\Bar{\sigma}_i^2}{\sigma_i^2} \leq \sum_i \frac{(\sigma_i + \beta_i)^2}{\sigma_i^2} \leq \sum_i (1 + 2\frac{\beta_i}{\sigma_i} + \frac{\beta_i^2}{\sigma_i^2}) \leq \sum_i (1 + \frac{2\beta_i}{\sigma_0} + \frac{\beta_i^2}{\sigma_i}) = d + \frac{2\lVert \sigma - \Bar{\sigma} \rVert_1}{\sigma_0} + \frac{\lVert \sigma - \Bar{\sigma} \rVert_2^2}{\sigma_0^2}$. Thus,
    \begin{align}
        \errgen(\alg) &\leq 2C \E_{S, \Bar{S}, \epsilon} [\tv(Q_{S, \epsilon}, Q_{\Bar{S}, \epsilon})] \\
        &\leq C \E_{S, \Bar{S}, \epsilon} \sqrt{2 \kl (Q_{S, \epsilon}, Q_{\Bar{S}, \epsilon})} \\
        &\leq \frac{2C}{\sqrt{\sigma_0}} \sqrt{\E [\lVert \Bar{\sigma} - \sigma \rVert_1]} + \frac{C}{\sigma_0} \E \left[\lVert \Bar{\sigma} - \sigma \rVert_2 \right] + \frac{C}{\sigma_0} \E \left[\lVert \Bar{m} - m \rVert_2 \right].
    \end{align}
\end{proof}

\cref{thm:errgen} holds only for bounded loss functions. 
We next introduce the upper bound for unbounded Lipschitz loss functions using Wasserstein distance \citep{Villani2008OptimalTO}.
\begin{definition}
    Suppose loss function $\loss(\theta, z)$ is $K$-Lipschitz with respect to $\theta$, i.e., $\frac{|\loss(\theta, z) - \loss(\theta', z)|}{\lVert \theta - \theta' \rVert} \leq K$ for all $z$.
\end{definition}
The Wasserstein-$p$ distance between two distributions $\mu$ and $\nu$ is defined as:
\begin{align}
    W_p(\mu, \nu) = \inf_{\gamma \in \Gamma(\mu, \nu)} \left( \E_{(x, y) \sim \gamma} d(x, y)^p \right)^{1/p},
\end{align}
where $d(x,y)$ is some distance and $\Gamma(\mu, \nu)$ is the set of all couplings of $\mu$ and $\nu$, i.e., for $\gamma \in \Gamma(\mu, \nu)$, $\int_y \gamma(x, y) = \mu(x)$ and $\int_x \gamma(x, y) = \nu(y)$. 
In the following we use the Euclidean distance.
According to Kantorovich duality \citep{Villani2008OptimalTO},
\begin{align}
    W_1(\mu, \nu) = \sup_{f, \text{Lip}(f)\leq 1} \int f d\mu(x) - \int f d\nu(y).
\end{align}
Inspired by \citet{ipm}, we derive the bound through the Wasserstein distance:
\begin{lemma} [Bayes-Stability 2]
\label{thm:wasserstein}
    Suppose the loss function $\loss(w, z)$ is $K$-Lipschitz. Let $S$, $\Bar{S}$ denote two datasets that only differ at one element $z$ and $\Bar{z}$. 
    Then, for any $p \geq 1$, the generalization error $\errgen(\alg)$ is upper bounded by 
    \begin{align}
        K \E_{S, \Bar{S}, \epsilon} [W_p(Q_{S, \epsilon}, Q_{\Bar{S}, \epsilon})].
    \end{align}
\end{lemma}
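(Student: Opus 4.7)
The plan is to mirror the structure of the proof of \cref{thm:errgen}, substituting the total-variation step with a step based on Kantorovich duality, and substituting \cref{lemma:tv-convexity} with the analogous convexity inequality for the Wasserstein distance. Once the bound is established for $p=1$, the statement for $p \geq 1$ follows immediately from the monotonicity $W_1(\mu,\nu) \leq W_p(\mu,\nu)$, which in turn is a direct application of Hölder's inequality to the integral defining the transport cost.

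I would begin by rewriting the generalization error in the single-point-posterior form already used in \cref{thm:errgen}, namely
\begin{align*}
    \errgen(\alg) = \E_{z,\bar{z}}\bigl[\E_{w\sim Q_{\bar{z}}}[\loss(w,z)] - \E_{w\sim Q_z}[\loss(w,z)]\bigr].
\end{align*}
For each fixed $z$, the map $w \mapsto \loss(w,z)/K$ is $1$-Lipschitz, so by Kantorovich duality the integrand is bounded by $K\cdot W_1(Q_z, Q_{\bar{z}})$. Substituting $Q_z = \E_{S_{n-1},\epsilon}[Q_{S_{n-1}\cup\{z\},\epsilon}]$ and analogously for $Q_{\bar{z}}$, I then need a Wasserstein analogue of \cref{lemma:tv-convexity}: for any joint law $P(X)$ and conditional kernels $P(Y\mid X), Q(Y\mid X)$,
\begin{align*}
    W_1\bigl(\E_{P(X)}[P(Y\mid X)],\; \E_{P(X)}[Q(Y\mid X)]\bigr) \leq \E_{P(X)}\bigl[W_1(P(Y\mid X), Q(Y\mid X))\bigr].
\end{align*}
The proof selects, for each $x$, an optimal coupling $\gamma_x$ of $P(Y\mid X{=}x)$ and $Q(Y\mid X{=}x)$, and defines the mixture coupling $\gamma(y_1,y_2) = \E_{P(X)}[\gamma_X(y_1,y_2)]$; its marginals agree with the two mixtures, and its cost equals $\E_{P(X)}[W_1(\cdot,\cdot)]$, so optimality of the left-hand side yields the inequality. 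Combining these steps gives $\errgen(\alg) \leq K\,\E_{z,\bar{z},S_{n-1},\epsilon}[W_1(Q_{S,\epsilon}, Q_{\bar{S},\epsilon})]$, which equals the claim for $p=1$. Finally, for general $p \geq 1$, applying Hölder to any coupling shows $W_1 \leq W_p$, so replacing $W_1$ by $W_p$ on the right only enlarges the bound, giving the stated result.

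The step most likely to require care is the Wasserstein convexity inequality, specifically the measurability of the selection $x \mapsto \gamma_x$ needed to make the mixture $\gamma$ a well-defined coupling. This is a standard issue resolved via the disintegration theorem (or equivalently via measurable selection for the lower-semicontinuous transport functional), and in our setting the spaces involved are Polish so no pathology arises. Everything else—Kantorovich duality, the rearrangement of expectations into the single-point-posterior form, and the $W_1 \leq W_p$ monotonicity—is routine and essentially mechanical once the convexity lemma is in place.
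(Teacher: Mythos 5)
Your proof is correct and follows essentially the same route as the paper's: rewrite $\errgen(\alg)$ in the single-point-posterior form from \cref{thm:errgen}, apply Kantorovich duality, push the expectation over $(S_{n-1},\epsilon)$ inside, and finish with $W_1 \leq W_p$ via H\"older applied to an arbitrary coupling. The one place you diverge is the convexity step. You prove a primal Wasserstein analogue of \cref{lemma:tv-convexity} by gluing optimal couplings $\gamma_x$ into a mixture coupling, which is why you end up worrying about measurable selection of $x \mapsto \gamma_x$. The paper sidesteps that issue entirely by staying on the dual side: since $Q_z = \E_{S_{n-1},\epsilon}[Q_{S,\epsilon}]$ and the Kantorovich dual objective $\int f\,d\mu - \int f\,d\nu$ is linear in the pair of measures, one has $\sup_{f}\E_{S_{n-1},\epsilon}[\cdots] \leq \E_{S_{n-1},\epsilon}[\sup_f \cdots]$ (the ``convexity of supremum''), and the inner supremum is exactly $W_1(Q_{\bar S,\epsilon}, Q_{S,\epsilon})$. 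Your primal construction is the more standard and more general statement (it yields joint convexity of $W_p^p$ for every $p$, not just $p=1$), but for this particular lemma the dual one-liner is cleaner and avoids the disintegration machinery you flag; both yield the identical bound.
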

\begin{proof}
    It is obvious that $\frac{1}{K} \loss(w, z)$ is 1-Lipschitz. 
    Using Kantorovich duality we have
    \begin{align*}
         &\errgen(\alg) \\
         &= \E_z \E_{\Bar{z}} \left[ \E_{w\sim Q_{\Bar{z}}}[\loss(w, z)] - \E_{w\sim Q_z}[\loss(w, z)]\right] \\
         &\leq K \E_{z, \Bar{z}} \sup_{f, \text{Lip}(f)\leq 1} \E_{w\sim Q_{\Bar{z}}}[f(w)] - \E_{w\sim Q_{z}}[f(w)] \\ 
         &\leq K \E_{z, \Bar{z}} \E_{S_{n-1}, \epsilon} \\
         & \qquad \qquad \sup_{f, \text{Lip}(f)\leq 1}  \E_{w \sim Q_{\Bar{S}, \epsilon}}[f(w)] - \E_{w \sim Q_{S, \epsilon}}[f(w)] \\
         &= K \E_{z, \Bar{z}, S_{n-1}, \epsilon} W_1(Q_{\bar{S}, \epsilon}, Q_{S, \epsilon}) \\
         &\leq K \E_{z, \Bar{z}, S_{n-1}, \epsilon} W_p (Q_{\bar{S}, \epsilon}, Q_{S, \epsilon}).
    \end{align*}
    The third line is because of the convexity of supremum. The last inequality follows the Holder's inequality, which states that $\E[|XY|] \leq \E[|X|^p]^{\frac{1}{p}} \E[|Y|^q]^{\frac{1}{q}}$ for $p, q \geq 1$ and $\frac{1}{p} + \frac{1}{q} = 1$. Thus $\E_{x, y \sim \gamma}[d(x, y) \cdot 1] \leq \left(\E_{x, y \sim \gamma}[d(x, y)^p]\right)^{\frac{1}{p}} \left( \E[1^{q}] \right)^{\frac{1}{q}} = \left(\E_{x, y \sim \gamma} d(x, y)^p \right)^{\frac{1}{p}}$. 
    Taking infimum on both sides, we have proved the inequality.
\end{proof}

As in the previous case we can bound the stability using parameter differences. In particular, 
letting $p=2$ and using the Wasserstein-2 distance for Gaussian distributions \citep{Gauss-wass}, we immediately have: 
\begin{lemma}
\label{thm:wasserstein2}
Under the condition of \cref{thm:wasserstein},
if $Q_{S, \epsilon} = \dN(m, \diag(\sigma^2))$ and $Q_{\Bar{S}, \epsilon} = \dN(\Bar{m}, \diag(\Bar{\sigma}^2))$, the generalization error is upper bounded by 
    \begin{align}
    \label{eq:wass-bound}
        K \E \lVert m - \Bar{m} \rVert_2 + K \E \lVert \sigma - \Bar{\sigma} \rVert_2.
    \end{align}
\end{lemma}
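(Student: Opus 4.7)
The plan is to apply the previous lemma (\cref{thm:wasserstein}) with $p=2$ to reduce the problem to bounding $\E_{S,\bar S,\epsilon}[W_2(Q_{S,\epsilon},Q_{\bar S,\epsilon})]$, and then use the closed-form expression for the Wasserstein-2 distance between two Gaussians to turn this into a statement about $m,\bar m,\sigma,\bar\sigma$. Since both covariances are diagonal, the matrices $\Sigma = \diag(\sigma^2)$ and $\bar\Sigma = \diag(\bar\sigma^2)$ commute, and the standard formula from \citet{Gauss-wass} collapses to
\begin{equation*}
W_2^2(Q_{S,\epsilon},Q_{\bar S,\epsilon}) = \lVert m-\bar m\rVert_2^2 + \tr\bigl(\Sigma+\bar\Sigma - 2(\Sigma\bar\Sigma)^{1/2}\bigr) = \lVert m-\bar m\rVert_2^2 + \lVert \sigma-\bar\sigma\rVert_2^2.
\end{equation*}

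From here the proof is essentially a two-line calculation. First I would use the elementary inequality $\sqrt{a^2+b^2} \le a+b$ for $a,b\ge 0$ with $a=\lVert m-\bar m\rVert_2$ and $b=\lVert \sigma-\bar\sigma\rVert_2$, to obtain
\begin{equation*}
W_2(Q_{S,\epsilon},Q_{\bar S,\epsilon}) \le \lVert m-\bar m\rVert_2 + \lVert \sigma-\bar\sigma\rVert_2.
\end{equation*}
Then I would take expectation over $S,\bar S,\epsilon$, apply \cref{thm:wasserstein} with $p=2$, and multiply through by $K$ to reach the claimed bound $K\E\lVert m-\bar m\rVert_2 + K\E\lVert\sigma-\bar\sigma\rVert_2$.

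No step here looks genuinely hard; the only place requiring any care is verifying the closed-form Wasserstein-2 distance for the diagonal Gaussian case, which is why the lemma cites \citet{Gauss-wass}. One small subtlety worth mentioning is that the optimal coupling for diagonal Gaussians corresponds to coordinate-wise transport, which gives a direct derivation of the formula without invoking matrix square roots: each coordinate contributes $(m_i-\bar m_i)^2 + (\sigma_i-\bar\sigma_i)^2$ to $W_2^2$. Apart from this, the argument is a straightforward specialization of \cref{thm:wasserstein} to the Gaussian family used by the variational algorithm.
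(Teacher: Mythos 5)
Your proposal is correct and matches the paper's approach exactly: the paper derives this lemma as an immediate consequence of \cref{thm:wasserstein} with $p=2$ combined with the closed-form Gaussian Wasserstein-2 distance, which for commuting (diagonal) covariances reduces to $\lVert m-\bar m\rVert_2^2 + \lVert \sigma-\bar\sigma\rVert_2^2$, followed by $\sqrt{a^2+b^2}\le a+b$. Your write-up simply supplies the details the paper leaves implicit.
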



\subsection{Bounds on Expected Parameter Differences}
\label{sec:bounds}
In this section, we draw upon the approach from \citet{HardtRS16} and \citet{SGD-stability2}, which bounds parameter differences for stochastic gradient descent.
Let $\theta_t$ be the parameter of $Q_{S, \epsilon}$ at step $t$ and $\Bar{\theta}_t$ be the parameter of $Q_{\Bar{S}, \epsilon}$ at step $t$. Let $G_t$ denote the update rule of stochastic gradient descent with learning rate $\alpha_t$,
\begin{align}
    \theta_{t} = G_t(\theta_{t-1}, S, \epsilon_{t}) = \theta_{t-1} - \alpha_t \nabla_\theta F(\theta_{t-1}, S, \epsilon_{t}).
\end{align}
Recall that $\epsilon_t$ contains all randomness at step $t$ and $\nabla F(\theta_{t-1}, S, \epsilon_t)$ is the approximation of $\nabla F(\theta_{t-1}, S)$. We make the following assumption \citep{HardtRS16,SGD-stability2} on the update rule:
\begin{definition}
    An update rule is $\eta$-expansive if $\sup_{\theta, \theta'}\frac{\lVert G(\theta, S, \epsilon) - G(\theta', S, \epsilon) \rVert}{\lVert \theta - \theta' \rVert} \leq \eta$ for any $S$ and $\epsilon$.
\end{definition}
The following theorem adapts the argument of \citet{SGD-stability2} to bound parameter differences as a function of expected gradient differences.
\begin{theorem}
\label{thm:diff}
    Given an algorithm that optimizes parameters $\theta$ using stochastic gradient descent, suppose it is $\eta_t$-expansive for step $t$. Let $S$ and $\Bar{S}$ be two random datasets of size $n$ that only differ at one element $z$ and $\Bar{z}$, and $\theta_T$ and $\Bar{\theta}_T$ denote the outputs under the same $\epsilon$.
    Then the expected difference of $\theta_T$ and $\Bar{\theta}_T$ satisfies 
    \begin{align}
    \label{eq:diff}
        \E_{S, \Bar{S}, \epsilon} [\lVert \theta_T - \Bar{\theta}_T \rVert]
        \leq \frac{1}{n} \sum_{t=1}^T \left(\prod_{i=t+1}^T \eta_i \right) \alpha_t \E_{S, \epsilon, \Bar{z}}[\Delta_t],
    \end{align}
    where $\Delta_t = \lVert \nabla F(\theta_{t-1}, \Bar{z}, \epsilon_t) - \nabla F(\theta_{t-1}, z, \epsilon_t) \rVert$.
\end{theorem}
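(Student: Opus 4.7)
The plan is to derive a one-step recursion for the parameter-difference norm $\delta_t := \lVert \theta_t - \bar{\theta}_t \rVert$ and then unroll it. Both trajectories start from the same initialization (which is determined by $\epsilon$ and therefore shared), so $\delta_0 = 0$ and any accumulation of difference arises from subsequent updates.

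The core step is to introduce the auxiliary quantity $G_t(\theta_{t-1}, \bar{S}, \epsilon_t)$ and apply the triangle inequality:
\begin{align*}
\delta_t &= \lVert G_t(\theta_{t-1}, S, \epsilon_t) - G_t(\bar{\theta}_{t-1}, \bar{S}, \epsilon_t) \rVert \\
&\leq \lVert G_t(\theta_{t-1}, S, \epsilon_t) - G_t(\theta_{t-1}, \bar{S}, \epsilon_t) \rVert \\
&\quad + \lVert G_t(\theta_{t-1}, \bar{S}, \epsilon_t) - G_t(\bar{\theta}_{t-1}, \bar{S}, \epsilon_t) \rVert.
\end{align*}
The second term is bounded by $\eta_t \delta_{t-1}$ using the $\eta_t$-expansive property of $G_t(\cdot, \bar{S}, \epsilon_t)$. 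For the first term, the definition of the SGD update cancels the common $\theta_{t-1}$ and leaves $\alpha_t \lVert \nabla F(\theta_{t-1}, S, \epsilon_t) - \nabla F(\theta_{t-1}, \bar{S}, \epsilon_t) \rVert$. Because $S$ and $\bar{S}$ differ only at index $n$ (with $z$ and $\bar{z}$) and $F$ decomposes as the average of per-point objectives, this gradient difference equals $\frac{1}{n}(\nabla F(\theta_{t-1}, z, \epsilon_t) - \nabla F(\theta_{t-1}, \bar{z}, \epsilon_t))$, whose norm is $\frac{1}{n}\Delta_t$. Combining gives the clean recursion $\delta_t \leq \eta_t \delta_{t-1} + \frac{\alpha_t}{n}\Delta_t$.

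Unrolling from $t=1$ to $T$ with $\delta_0 = 0$ yields $\delta_T \leq \frac{1}{n}\sum_{t=1}^T \bigl(\prod_{i=t+1}^T \eta_i\bigr)\alpha_t \Delta_t$. Taking expectation over $S, \bar{S}, \epsilon$ and identifying $(S, \bar{S}, \epsilon) \equiv (S, \bar{z}, \epsilon)$ (since $\bar{S}$ is determined by $S$ together with $\bar{z}$) delivers the claimed bound. The argument is mostly bookkeeping, but the main subtlety worth flagging is the choice of intermediate iterate: splitting via $G_t(\theta_{t-1}, \bar{S}, \epsilon_t)$ as above simultaneously places the expansivity factor $\eta_t$ on the parameter-difference term and evaluates the gradient difference at $\theta_{t-1}$, matching the $\Delta_t$ in the theorem statement; the symmetric split through $G_t(\bar{\theta}_{t-1}, S, \epsilon_t)$ would produce $\Delta_t$ evaluated at $\bar{\theta}_{t-1}$ instead, and is therefore the wrong choice here.
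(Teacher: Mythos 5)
Your recursion-and-unroll structure matches the paper's proof, but there is a gap in how you obtain the $\frac{1}{n}$ factor: you implicitly assume the gradient at every step is computed on the \emph{full} dataset, so that $\nabla F(\theta_{t-1}, S, \epsilon_t) - \nabla F(\theta_{t-1}, \bar{S}, \epsilon_t) = \frac{1}{n}\bigl(\nabla F(\theta_{t-1}, z, \epsilon_t) - \nabla F(\theta_{t-1}, \bar{z}, \epsilon_t)\bigr)$ deterministically at each step. In the paper's setting, $\epsilon_t$ includes the mini-batch selection (the experiments use batch size $b=1000$ on a dataset of size $50000$), so the stochastic gradient at step $t$ only touches the $b$ points in the current batch. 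Your identity then fails: if the perturbed index is not in the batch, the two gradients coincide exactly and only the expansivity factor $\eta_t$ applies; if it is in the batch, the difference is $\frac{1}{b}\Delta_t$, not $\frac{1}{n}\Delta_t$. The paper's proof handles this with a two-case analysis, unrolls to get $\lVert \theta_T - \bar{\theta}_T\rVert \leq \frac{1}{b}\sum_t \bigl(\prod_{i=t+1}^T \eta_i\bigr)\mathbbm{1}_{z\in S_t}\,\alpha_t \Delta_t$, and only then recovers the $\frac{1}{n}$ by taking the expectation over the batch sequence, using $\Pr[z \in S_t] = \frac{b}{n}$ so that $\frac{1}{b}\cdot\frac{b}{n} = \frac{1}{n}$. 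This last probabilistic step is the substantive content you are missing; without it your argument proves the theorem only for full-batch gradient descent. Your closing remark about which intermediate iterate to use in the triangle inequality is a fine observation but does not affect this issue.
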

\begin{proof}
    Let $S_t$ and $\Bar{S}_t$ denote the subset at step $t$. With respect to the same $\epsilon_t$ (including the same batch sequence), $S_t$ and $\bar{S}_t$ have at most one different element.
    We have two cases:
    \begin{itemize}
    \item Case 1: the different element is not selected, hence $S_t = \Bar{S}_t$, and since $G$ is $\eta_t$ expansive:
        \begin{align*}
            \lVert \theta_t - \Bar{\theta}_t \rVert \leq \eta_t \lVert \theta_{t-1} - \Bar{\theta}_{t-1} \rVert.
        \end{align*}
        \item Case 2: the different element is selected.
        \begin{align*}
            \lVert \theta_t - \Bar{\theta}_t \rVert &= \lVert (\theta_{t-1} - \alpha_t \nabla F(\theta_{t-1}, S_t, \epsilon_{t})) \\
            & \quad - (\Bar{\theta}_{t-1} - \alpha_t \nabla F(\Bar{\theta}_{t-1}, \Bar{S}_t, \epsilon_t)) \rVert \\
            &= \lVert (\theta_{t-1} - \alpha_t \nabla F(\theta_{t-1}, \Bar{S}_t, \epsilon_t)) \\
            & \quad - (\Bar{\theta}_{t-1} - \alpha_t \nabla F(\Bar{\theta}_{t-1}, \Bar{S}_t, \epsilon_t)) \\
            & \quad + \alpha_t (\nabla F(\theta_{t-1}, \Bar{S}_t, \epsilon_t) - \nabla F(\theta_{t-1}, S_t, \epsilon_t)) \rVert \\
            &\leq \eta_t \lVert \theta_{t-1} - \Bar{\theta}_{t-1} \rVert \\
            &\quad + \alpha_t \lVert \nabla F(\theta_{t-1}, \Bar{S}_t, \epsilon_t) - \nabla F(\theta_{t-1}, S_t, \epsilon_t) \rVert.
        \end{align*}
        Since $\Bar{S}_t$ and $S_t$ only differs at one element, $\lVert \nabla F(\theta_{t-1}, \Bar{S}_t, \epsilon_t) - \nabla F(\theta_{t-1}, S_t, \epsilon_t) \rVert = \frac{1}{b} \lVert \nabla F(\theta_{t-1}, \Bar{z}, \epsilon_t) - \nabla F(\theta_{t-1}, z, \epsilon_t) \rVert = \frac{1}{b} \Delta_t$, where $b$ is the batch size.
    \end{itemize}
    Thus, 
    \begin{align}
        \lVert \theta_T - \Bar{\theta}_T \rVert &\leq \eta_T \lVert \theta_{T-1} - \Bar{\theta}_{T-1} \rVert + \mathbbm{1}_{z \in S_T} \frac{\alpha_T}{b} \Delta_T \\
        &\leq \frac{1}{b} \sum_{t=1}^T \left(\prod_{i=t+1}^T \eta_i \right) \mathbbm{1}_{z\in S_t} \alpha_t \Delta_t,
    \end{align}
    where the base case is $\theta_0 = \bar{\theta}_0$.
    Since the probability that $z \in S_t$ is $\frac{b}{n}$, then the expected difference is 
    \begin{align}
        \E \lVert \theta_T - \Bar{\theta}_T \rVert \leq \frac{1}{n} \sum_{t=1}^T \left(\prod_{i=t+1}^T \eta_i \right) \alpha_t \E_{S, \epsilon, \bar{z}} [\Delta_t].
    \end{align}
\end{proof}

\cref{thm:diff} provides a way to compute the bound {\em exactly}. As we show in the experiments this allows us to obtain tight generalization bounds which are not possible otherwise. For completeness, the following Corollary provides an asymptotic upper bound using stronger requirements.
The proof follows the construction of \citet{SGD-stability2} and is included in \cref{sec:proof-bound}.
\begin{corollary}
\label{cor:logT}
    Suppose $\nabla F(\theta, S, \epsilon)$ is $L$-Lipschitz and $\beta$-bounded, then with learning rate $\alpha_t = \frac{c}{(t+2) \log (t+2)}$ where $c$ is chosen that $cL < 1$, $\E \lVert \theta_T - \Bar{\theta}_T \rVert \leq O(\frac{\log T}{n})$.
\end{corollary}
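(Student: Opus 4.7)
The plan is to instantiate \cref{thm:diff} with the two extra hypotheses ($L$-Lipschitz and $\beta$-bounded gradients) and then control the resulting weighted sum via integral comparison, essentially following the construction in \citet{HardtRS16,SGD-stability2}.

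First I would identify the expansion coefficient. Since $\nabla F(\cdot, S, \epsilon)$ is $L$-Lipschitz, the gradient-descent update $G_t(\theta) = \theta - \alpha_t \nabla F(\theta, S, \epsilon_t)$ is $(1+\alpha_t L)$-expansive by the standard triangle-inequality argument, so we may take $\eta_t = 1 + \alpha_t L$. The $\beta$-boundedness of $\nabla F$ gives the crude but sufficient uniform bound $\Delta_t \le 2\beta$. Plugging both into \cref{eq:diff} yields
\begin{align*}
\E \lVert \theta_T - \bar\theta_T \rVert \;\le\; \frac{2\beta}{n} \sum_{t=1}^{T} \alpha_t \prod_{i=t+1}^{T}(1+\alpha_i L).
\end{align*}

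Next I would handle the product by the elementary inequality $1+x \le e^x$, which gives $\prod_{i=t+1}^T(1+\alpha_i L) \le \exp\bigl(L \sum_{i=t+1}^T \alpha_i\bigr)$. With the schedule $\alpha_t = c/((t+2)\log(t+2))$ the tail sum is controlled by the integral $\int_t^T dx/((x+2)\log(x+2)) = \log\log(T+2) - \log\log(t+2)$, so
\begin{align*}
\prod_{i=t+1}^{T}(1+\alpha_i L) \;\le\; \left(\frac{\log(T+2)}{\log(t+2)}\right)^{cL}.
\end{align*}
Substituting and pulling out the factor depending only on $T$ gives
\begin{align*}
\E \lVert \theta_T - \bar\theta_T \rVert \;\le\; \frac{2\beta c\, \log^{cL}(T+2)}{n} \sum_{t=1}^{T} \frac{1}{(t+2)\log^{1+cL}(t+2)}.
\end{align*}

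The final step uses the assumption $cL<1$, which makes the exponent $1+cL$ strictly greater than $1$, so $\int dx/(x\log^{1+cL} x)$ is convergent at infinity. Hence the sum over $t$ is bounded by a constant independent of $T$, and we are left with $\E \lVert \theta_T - \bar\theta_T \rVert = O(\log^{cL}(T)/n)$, which in particular is $O(\log T / n)$ as claimed. The main obstacle is really just bookkeeping: correctly identifying that $\alpha_t$ was chosen precisely so that (i) the integral of $\alpha_t$ is $\log\log$ (giving a polylog blow-up of the product rather than a polynomial one) and (ii) the condition $cL<1$ makes the outer series a convergent $p$-series in $\log t$. No new ideas beyond \citet{HardtRS16,SGD-stability2} are required.
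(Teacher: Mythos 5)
Your proof is correct and takes essentially the same route as the paper's: instantiate \cref{thm:diff} with $\eta_t = 1+\alpha_t L$ and $\Delta_t \le 2\beta$, bound the product via $1+x\le e^x$ and an integral comparison of $\sum_i \alpha_i$, and observe that the remaining series over $t$ converges. The only cosmetic difference is that the paper invokes $cL<1$ early to replace $\left(\frac{\log(T+1)}{\log(t+2)}\right)^{cL}$ by $\frac{\log(T+1)}{\log(t+2)}$ and then sums $\frac{1}{(t+2)\log^2(t+2)}$, whereas you keep the exponent and get the marginally sharper $O(\log^{cL}(T)/n)$; note only that convergence of your series needs just $cL>0$ (the exponent $1+cL>1$ for any positive $cL$), with $cL<1$ actually being used in the last step $\log^{cL}T=O(\log T)$.
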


\subsection{Discussion: Stability vs.\ PAC-Bayes Bounds}
\label{sec:pac-bayes}

As mentioned above, prior work has developed  PAC-Bayes Bounds for certain variants of VI. In this section we review some of these bounds and discuss the qualitative differences between the two types of bounds.

\citet{Germain2016pac} provides a generalization error bound for a $C$-bounded loss function as follows: with probability $1 - \delta$,
\begin{align}
    \frac{1}{\lambda} \left(\kl(Q_S \parallel P) + \log \frac{1}{\delta}\right) + \frac{\lambda C^2}{2n}.
\end{align}
By optimizing $\lambda$ as $\lambda = \frac{1}{C} \sqrt{2n \left(\kl(Q_S \parallel P) + \log \frac{1}{\delta}\right)}$, we obtain the following bound:
\begin{align}
    C \sqrt{\frac{2 \left(\kl(Q_S \parallel P) + \log \frac{1}{\delta}\right)}{n}}.
\end{align}

On the other hand, \citet{pac-bayes-book} provides a similar bound in the form:
\begin{align}
\label{eq:sqrt}
    C\sqrt{\frac{\kl(Q_S \parallel P) + \log \frac{n}{\delta}}{2(n-1)}}.
\end{align}
that can be tighter in some cases.
In these results, 
the ``prior" $P$ is only required to be data independent and is not directly related to the algorithm. 
Therefore, for Bayesian algorithms, one can pick a different $P$ other than the prior used in the objective function. 
In the experimental illustration,
we explore using both the prior and the initialization $Q_0$ so as to obtain the tightest possible bound.

Additionally, \citet{nonvacuous} proposed a non-vacuous bound specifically for the 0-1 loss. By employing a union-bound argument, 
where the prior variance is set as $\lambda = c \exp(-j / b)$ for $j \in \mathbb{N}$ and fixed $b$ and $c$,
they ensure that the generalization error can be bounded, with probability $1-\delta$, by
\begin{align}
\label{eq:bre}
    \sqrt{\frac{\kl(Q_S \parallel \mathcal{N}(m_0, \lambda I)) + 2 \log \left(b \log \frac{c}{\lambda}\right) + \log \frac{\pi^2 n}{6 \delta}}{2(n-1)}},
\end{align}
where $m_0$ denotes the random initialization of the mean parameter.

These bounds have been leveraged to develop efficient Bayesian algorithms, by explicitly optimizing the sum of the training set loss and the bound, 
which can be seen to have a similar form to VI and therefore interpreted as variants of VI.
On the other hand, PAC-Bayes bounds are valid for any distribution within the specified family. 
They can therefore be applied to the output of VI directly. 

From this perspective, our bounds are more restricted in that they are valid only for the output of  a certain class of optimization problems when optimized by SGD. In addition, the stability bound in \eqref{eq:diff} grows with the number of optimization steps $T$ which can make it less attractive,
and for a fixed dataset this may necessitate the use of larger batch sizes to reduce $T$.
On the other hand, the dependence on dataset size in \eqref{eq:diff} is $\frac{1}{n}$ whereas the one in the PAC-Bayes bounds is $\frac{1}{\sqrt{n}}$ so our bound has the potential to be tighter for large datasets. 
Appendix~\ref{app:pacbayes} shows an example where the PAC-Bayes bound can grow arbitrarily in a case where the stability bound is tight. 
Overall, the two approaches can have advantages in different situations and both contribute to our understanding of generalization performance of algorithms.

\section{Experimental Illustration}

In this section we explore the potential of stability based bounds to capture generalization error 
and compare them to PAC Bayes bounds.
We also evaluate the expansion rate that appears in the bound showing that it can be small, and hence better in practice than the use of the asymptotic bounds. 

We adopted the experimental setup used by \citet{Li} and \citet{banerjee} and conducted our experiments on CIFAR10 using the same CNN model that has been employed in these works. 
For algorithms, 
we use the ELBO (\cref{eq:elbo}) and DLM variant (\cref{eq:dlm}) with a KL divergence coefficient of 0.1, a value that has been demonstrated to yield superior results in previous studies \citep[e.g.,][]{cold-posterior}.
Our optimization was performed using the SGD optimizer with an initial learning rate of 0.005, momentum of 0.99, and we reduced the learning rate by a factor of 0.9 every 5 epochs thereafter. We select the batch size to be 1000 and set $\sigma_0=0.01$. 
All experiments are run on a single NVIDIA Tesla V100 PCIe 32 GB GPU.

We perform two sets of experiments. 
In the first we test the performance of ELBO 
with or without data augmentation (random cropping and horizontal flipping \citep{shorten2019survey}) as well as random label perturbations,
comparing the generalization error (measured by 0-1 loss) and our bound (\cref{eq:kl-bound}) with $C=1$ under these situations. 
In the second, 
following the observation by \citet{dlm-bnn} that DLM (\cref{eq:dlm}) does not perform as well as ELBO in Bayesian neural networks,
we use the bounds to compare ELBO and DLM in terms of log loss.

The primary goals of our experiments were to demonstrate the following key points.
Our bound is non-vacuous in successful learning cases and becomes 
vacuous when the dataset contains a sufficiently high proportion of random labels.
In addition, our bound accurately reflects the reduction in generalization error with data augmentation.
Finally, our bound can potentially provide an explanation for the failure of DLM, suggesting that its lower stability might be the cause of higher generalization error.
For these experiments, the stability bound is both tighter and has more explanatory power than the PAC-Bayes bounds, hence demonstrating the utility of the new derivations.


\begin{figure*}[t]
    \centering
    \begin{subfigure}[b]{0.40\textwidth}
         \centering
         \includegraphics[width=\textwidth]{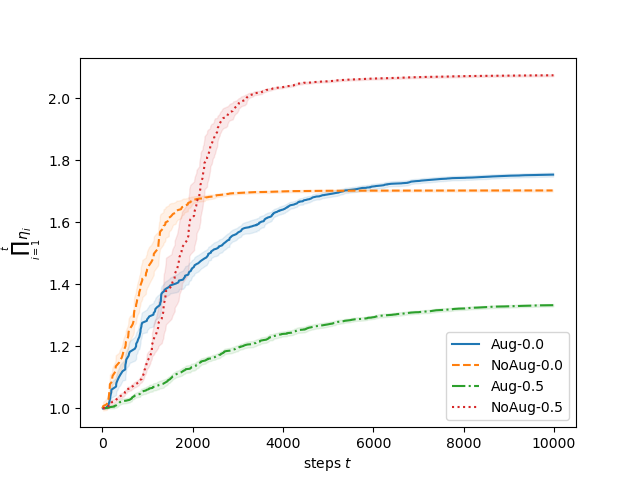}
    \end{subfigure}
    \begin{subfigure}[b]{0.40\textwidth}
         \centering
         \includegraphics[width=\textwidth]{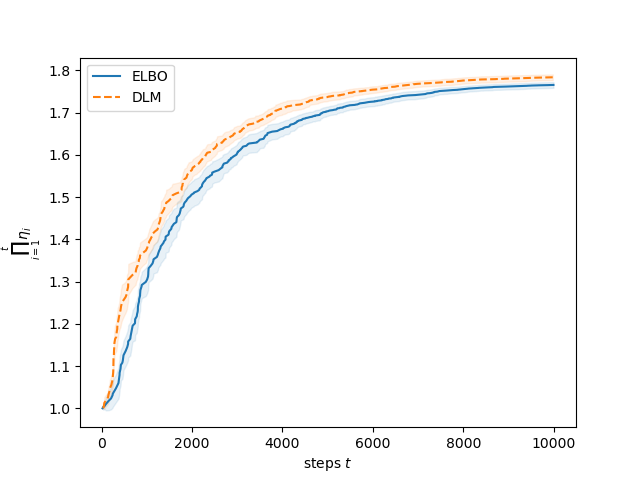}
    \end{subfigure}
    \caption{Cumulative expansion rates under various conditions. The left panel displays expansion rates with and without data augmentation, comparing cases with random labels (50\% random, labeled as 0.5) and without random labels (labeled as 0.0). The right panel shows expansion rates across different algorithms with data augmentation and no random labels. The shaded areas represent the standard deviation across 10 runs.}
    \label{fig:expansion}
\end{figure*}

\paragraph{Expansion Rate}
We start by evaluating the expansion rate which is needed for the exact bound. 
To perform this,
we randomly initialize two models and then run the same algorithm with the same batch sequence. We keep track of the norm of the parameter difference and compute the expansion rate at each step $t$. 
For simplicity, we take the maximum of the expansion rate of both $m$ and $\sigma$ (both $L_1$-norm and $L_2$-norm).

\cref{fig:expansion} shows the cumulative expansion rate under various conditions. It is evident that for each method the expansion rate increases more slowly as the number of steps increases, and the final rate shows minimal variance. 
We observe that without data augmentation the expansion rate quickly levels off. This occurs because the dataset is straightforward to learn, and once all data has been learned, the gradient approaches zero, causing the expansion rate to flatten. In contrast, with data augmentation, the expansion rate continues to grow. 
We also observe that the expansion rate of DLM is slightly higher than that of ELBO.

For use in evaluating generalization bounds, 
we note that the final cumulative expansion rate is much smaller than the $\log T$ factor in \cref{cor:logT} in all cases and will therefore lead to tighter bounds in practice.
We therefore run this evaluation 10 times and use the mean value plus four standard deviation as the final value $\eta_t$.

\begin{figure*}[t]
    \centering
        \begin{subfigure}[b]{0.32\textwidth}
         \centering
         \includegraphics[width=\textwidth]{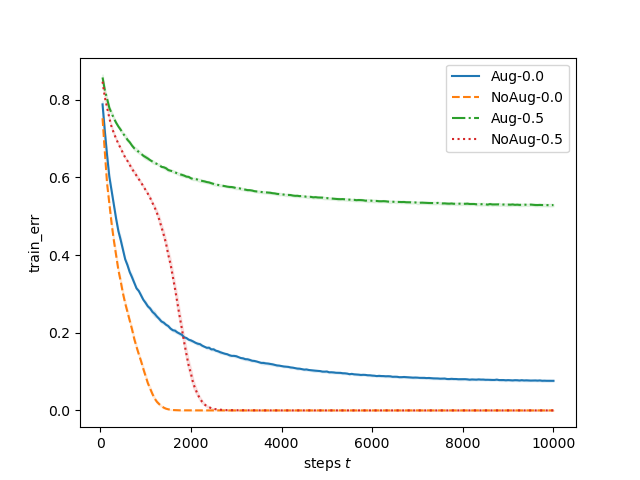}
         \caption{Train error.}
    \end{subfigure}
    \begin{subfigure}[b]{0.32\textwidth}
         \centering
         \includegraphics[width=\textwidth]{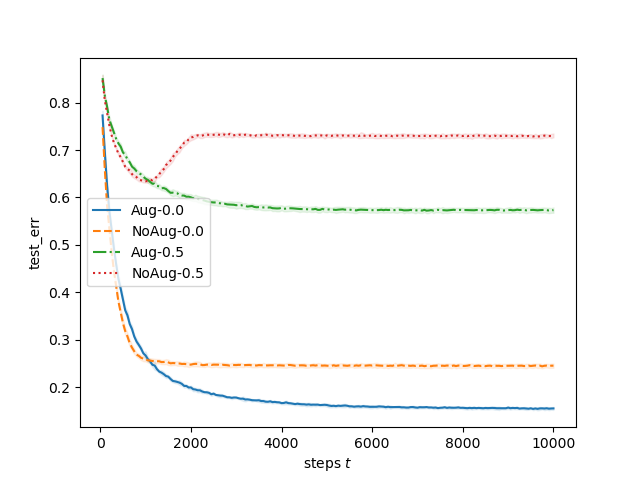}
         \caption{Test error.}
    \end{subfigure}
    \begin{subfigure}[b]{0.32\textwidth}
         \centering
         \includegraphics[width=\textwidth]{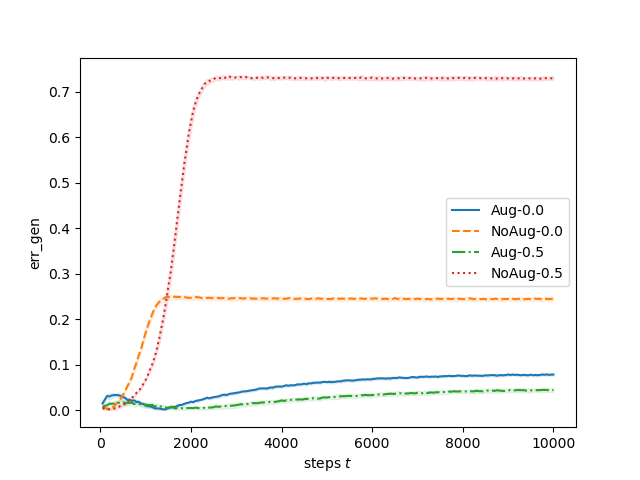}
         \caption{Generalization error.}
    \end{subfigure}
    \begin{subfigure}[b]{0.32\textwidth}
         \centering
         \includegraphics[width=\textwidth]{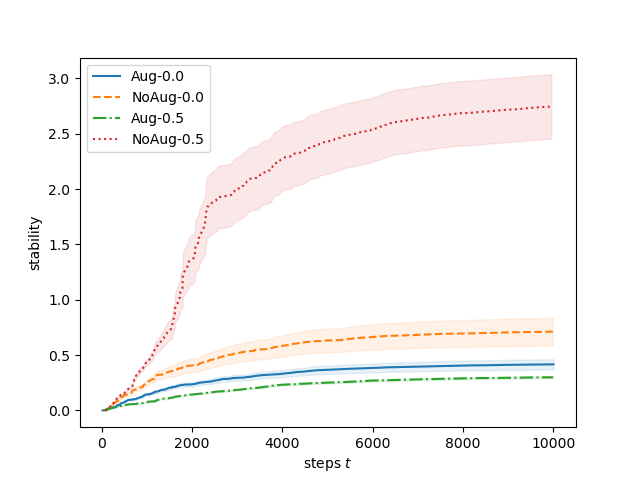}
         \caption{Stability Bound (\ref{eq:kl-bound}).}
    \end{subfigure}
    \begin{subfigure}[b]{0.32\textwidth}
         \centering
         \includegraphics[width=\textwidth]{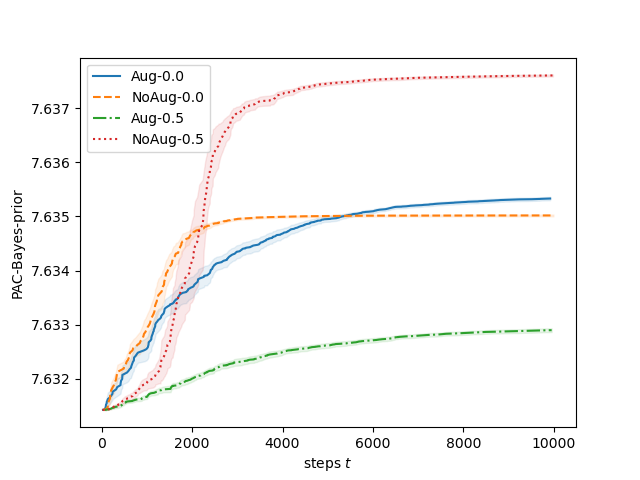}
         \caption{PAC-Bayes (\ref{eq:sqrt}) with prior.}
    \end{subfigure}
    \begin{subfigure}[b]{0.32\textwidth}
         \centering
         \includegraphics[width=\textwidth]{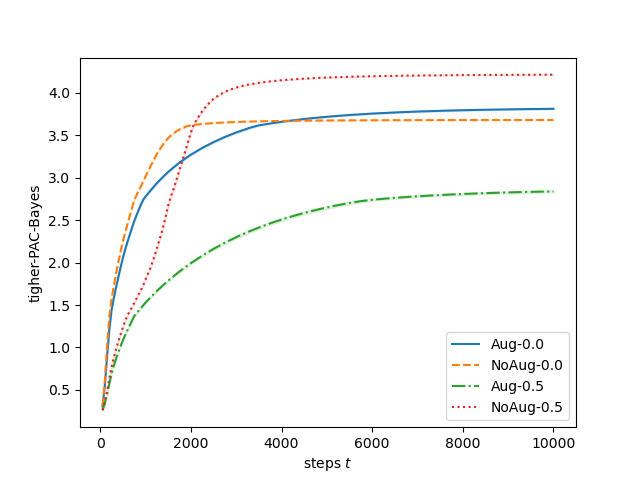}
         \caption{Tighter PAC-Bayes (\ref{eq:bre}).}
    \end{subfigure}
    \caption{Generalization error and bounds.}
    \label{fig:bound}
\end{figure*}


\begin{figure*}[h]
    \centering
        \begin{subfigure}[b]{0.32\textwidth}
         \centering
         \includegraphics[width=\textwidth]{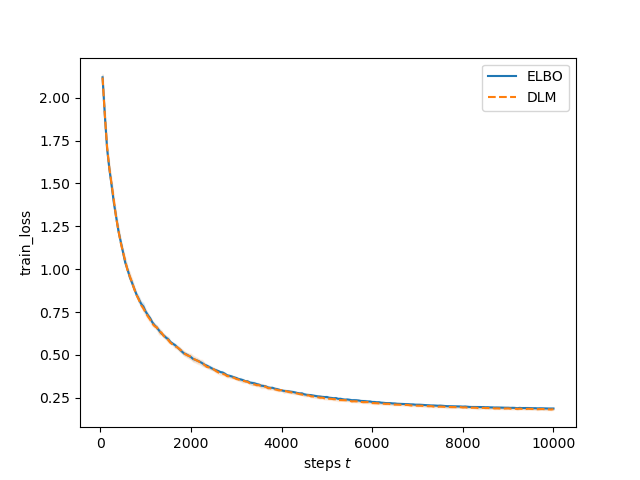}
         \caption{Train loss.}
    \end{subfigure}
    \begin{subfigure}[b]{0.32\textwidth}
         \centering
         \includegraphics[width=\textwidth]{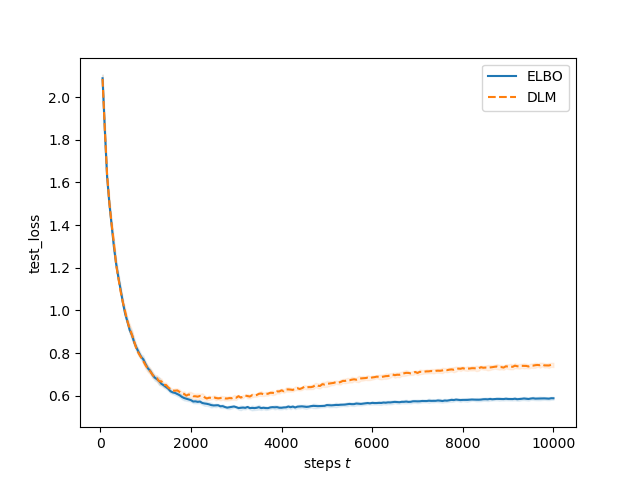}
         \caption{Test loss.}
    \end{subfigure}    
    \begin{subfigure}[b]{0.32\textwidth}
         \centering
         \includegraphics[width=\textwidth]{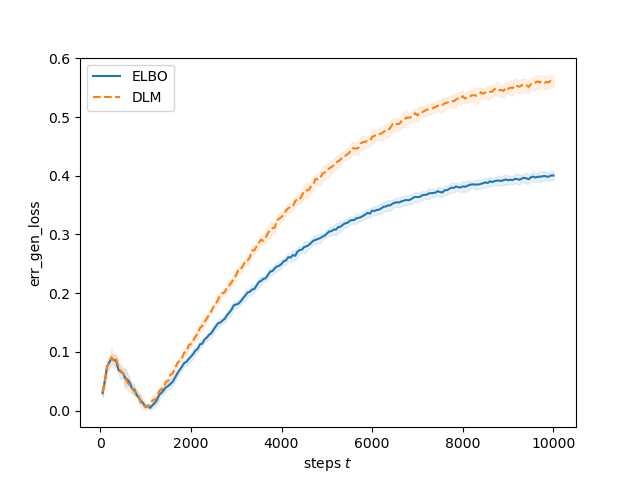}
         \caption{Generalization error.}
    \end{subfigure}
    \begin{subfigure}[b]{0.32\textwidth}
         \centering
         \includegraphics[width=\textwidth]{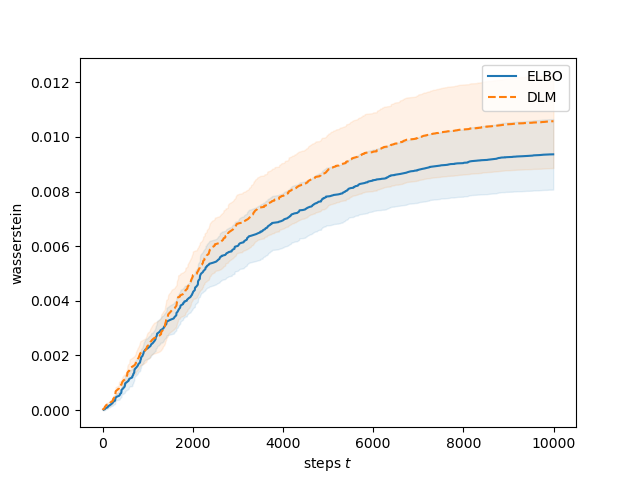}
         \caption{Bound (\ref{eq:wass-bound}, without $K$).}
    \end{subfigure}
    \begin{subfigure}[b]{0.32\textwidth}
         \centering
         \includegraphics[width=\textwidth]{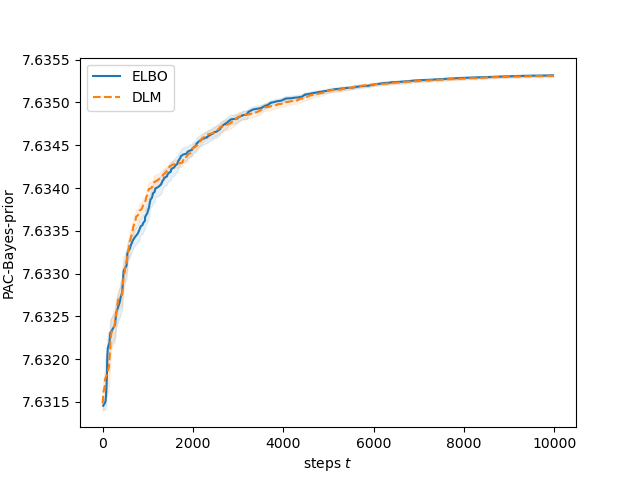}
         \caption{PAC-Bayes (\ref{eq:sqrt}) with prior.}
    \end{subfigure}
    \begin{subfigure}[b]{0.32\textwidth}
         \centering
         \includegraphics[width=\textwidth]{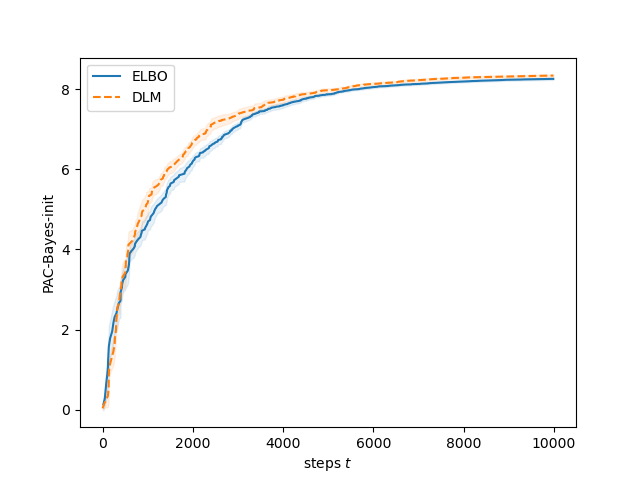}
         \caption{PAC-Bayes (\ref{eq:sqrt}) with init $Q_0$.}
    \end{subfigure}
    \caption{Generalization error and bounds for ELBO and DLM with data augmentation and no random labels.}
    \label{fig:dlm-bound}
\end{figure*}

\paragraph{Generalization bounds: ELBO with data augmentation and random labels.}
To evaluate the bound with parameter differences (\cref{eq:diff}), we need to take expectations over $z$, $\Bar{z}$ and the randomness $\epsilon$. 
To perform this,
we randomly sample 50 pairs of $z$ and $\bar{z}$ from the training and test dataset, respectively. For 
$\epsilon$, we conduct 10 independent runs, with each run selecting a random batch sequence and any other random samples required for optimization.

\cref{fig:bound} (a-c) present the train loss, test loss, and generalization error for ELBO in terms of 0-1 loss along with the stability bound (d) and PAC-Bayes bounds (e,f).
The generalization error is calculated as the absolute difference between the training error and the test error. For the stability bound, we set $C=1$. For PAC-Bayes bounds, we select $\delta=0.025$ and specifically for \cref{eq:bre}, we select $b=100$ and $c=0.1$ following the original paper. 

We first observe that the stability bound is non-vacuous except in the scenario without data augmentation and with 50\% random labels, where there is significant overfitting. The PAC Bayes bounds are less tight in all four scenarios. 
Second, our bound induces the correct ranking over the four cases, and specifically shows that without noisy labels the generalization error is lower when data augmentation is used. 
The PAC Bayes bounds do not demonstrate the benefit of data augmentation in this case. 
Third, note that the smallest generalization error occurs in the case with both data augmentation and 50\% random labels. However, this does not imply the best performance on the test set; in this scenario, the training error converges to 0.5, and the test error is slightly above this value.
Our bound captures this behavior well.

\paragraph{Generalization bounds: ELBO vs.\ DLM.}
\cref{fig:dlm-bound} (a-c) present train and test loss and generalization error in terms of log loss of ELBO vs.\ DLM,
and (d-f) present the 
stability and PAC-Bayes bounds.
When calculating the bound in \cref{eq:wass-bound}, we omit the Lipschitz constant $K$ due to the difficulty in its evaluation. Since the Lipschitz constant remains the same for a given loss function (though not necessarily for the objective), our focus is on the relative comparison between the two methods.
Our bound effectively captures the fact that DLM has a worse generalization error than ELBO. In contrast, the PAC-Bayes bounds are nearly identical for both methods.
Our bound, which is based on the sum of the norms of the gradient differences, underscores the potential instability of the DLM algorithm
for Bayesian neural networks, which might explain its inferior performance for such models.

\section{Conclusion and Future Work}
In this study, we presented a new generalization bound for variational inference by leveraging recent advances in stability-based bounds for Stochastic Gradient Langevin Dynamics (SGLD). Our approach extends the stability argument of stochastic gradient descent to 
a family of algorithms which includes
variational inference, addressing both mean and variance parameters.
Empirical evaluations demonstrated that our bound produces meaningful results with large neural network models and effectively captures generalization error in scenarios involving random labels and data augmentation.

This work opens several promising avenues for future research. The general applicability of our approach suggests that the bound could be extended to various Bayesian algorithms, such as $\text{PAC}^2$ variational learning \citep{Masegosa20}. However, a limitation of our approach is that the bound is primarily effective for algorithms optimized via stochastic gradient descent. For more advanced optimizers like Adam, characterizing parameter differences becomes significantly more challenging.

\section*{Acknowledgments}
This work was partly supported by NSF under grant 2246261. The experiments in this paper were run on the Big Red computing system at Indiana University, supported in part by Lilly Endowment, Inc., through its support for the Indiana University Pervasive Technology Institute.

\FloatBarrier

\bibliography{main}

\begin{thebibliography}{42}
\providecommand{\natexlab}[1]{#1}
\providecommand{\url}[1]{\texttt{#1}}
\expandafter\ifx\csname urlstyle\endcsname\relax
  \providecommand{\doi}[1]{doi: #1}\else
  \providecommand{\doi}{doi: \begingroup \urlstyle{rm}\Url}\fi

\bibitem[Alquier et~al.(2016)Alquier, Ridgway, and
  Chopin]{Alquier2016properties}
Pierre Alquier, James Ridgway, and Nicolas Chopin.
\newblock On the properties of variational approximations of gibbs posteriors.
\newblock \emph{Journal of Machine Learning Research}, 17\penalty0
  (236):\penalty0 1--41, 2016.

\bibitem[Amit et~al.(2022)Amit, Epstein, Moran, and Meir]{ipm}
Ron Amit, Baruch Epstein, Shay Moran, and Ron Meir.
\newblock Integral probability metrics pac-bayes bounds.
\newblock In S.~Koyejo, S.~Mohamed, A.~Agarwal, D.~Belgrave, K.~Cho, and A.~Oh,
  editors, \emph{Advances in Neural Information Processing Systems}, volume~35,
  pages 3123--3136. Curran Associates, Inc., 2022.

\bibitem[Banerjee et~al.(2022)Banerjee, Chen, Li, and Zhou]{banerjee}
Arindam Banerjee, Tiancong Chen, Xinyan Li, and Yingxue Zhou.
\newblock Stability based generalization bounds for exponential family langevin
  dynamics.
\newblock In \emph{International Conference on Machine Learning, {ICML} 2022,
  17-23 July 2022, Baltimore, Maryland, {USA}}, volume 162, pages 1412--1449,
  2022.

\bibitem[Blei et~al.(2003)Blei, Ng, and Jordan]{Blei2003}
David~M. Blei, Andrew~Y. Ng, and Michael~I. Jordan.
\newblock Latent {D}irichlet allocation.
\newblock \emph{Journal of Machine Learning Research}, pages 993--1022, 2003.

\bibitem[Blundell et~al.(2015)Blundell, Cornebise, Kavukcuoglu, and
  Wierstra]{BNN}
Charles Blundell, Julien Cornebise, Koray Kavukcuoglu, and Daan Wierstra.
\newblock Weight uncertainty in neural network.
\newblock In \emph{Proceedings of the 32nd International Conference on Machine
  Learning}, pages 1613--1622, 2015.

\bibitem[Bousquet and Elisseeff(2002)]{BousquetE02}
Olivier Bousquet and Andr{\'{e}} Elisseeff.
\newblock Stability and generalization.
\newblock \emph{J. Mach. Learn. Res.}, 2:\penalty0 499--526, 2002.

\bibitem[Dziugaite and Roy(2017)]{nonvacuous}
Gintare~Karolina Dziugaite and Daniel~M. Roy.
\newblock Computing nonvacuous generalization bounds for deep (stochastic)
  neural networks with many more parameters than training data.
\newblock In Gal Elidan, Kristian Kersting, and Alexander Ihler, editors,
  \emph{Proceedings of the Thirty-Third Conference on Uncertainty in Artificial
  Intelligence, {UAI}}, 2017.

\bibitem[Germain et~al.(2016)Germain, Bach, Lacoste, and
  Lacoste-Julien]{Germain2016pac}
Pascal Germain, Francis Bach, Alexandre Lacoste, and Simon Lacoste-Julien.
\newblock Pac-bayesian theory meets bayesian inference.
\newblock In \emph{Advances in Neural Information Processing Systems}, 2016.

\bibitem[Golowich et~al.(2018)Golowich, Rakhlin, and Shamir]{nn-rademacher}
Noah Golowich, Alexander Rakhlin, and Ohad Shamir.
\newblock Size-independent sample complexity of neural networks.
\newblock In \emph{Proceedings of the 31st Conference On Learning Theory},
  pages 297--299, 2018.

\bibitem[Graves(2011)]{Practical-VI}
Alex Graves.
\newblock Practical variational inference for neural networks.
\newblock In \emph{Advances in Neural Information Processing Systems}, 2011.

\bibitem[Haghifam et~al.(2020)Haghifam, Negrea, Khisti, Roy, and
  Dziugaite]{HaghifamNK0D20}
Mahdi Haghifam, Jeffrey Negrea, Ashish Khisti, Daniel~M. Roy, and
  Gintare~Karolina Dziugaite.
\newblock Sharpened generalization bounds based on conditional mutual
  information and an application to noisy, iterative algorithms.
\newblock In \emph{Advances in Neural Information Processing Systems}, 2020.

\bibitem[Hardt et~al.(2016)Hardt, Recht, and Singer]{HardtRS16}
Moritz Hardt, Ben Recht, and Yoram Singer.
\newblock Train faster, generalize better: Stability of stochastic gradient
  descent.
\newblock In \emph{Proceedings of the 33nd International Conference on Machine
  Learning}, volume~48, pages 1225--1234, 2016.

\bibitem[Hernandez-Lobato et~al.(2016)Hernandez-Lobato, Li, Rowland, Bui,
  Hernandez-Lobato, and Turner]{black-box-alpha}
Jose Hernandez-Lobato, Yingzhen Li, Mark Rowland, Thang Bui, Daniel
  Hernandez-Lobato, and Richard Turner.
\newblock Black-box alpha divergence minimization.
\newblock In \emph{Proceedings of The 33rd International Conference on Machine
  Learning}, pages 1511--1520, 2016.

\bibitem[Johnson et~al.(2016)Johnson, Duvenaud, Wiltschko, Adams, and
  Datta]{Johnson2016}
Matthew~J. Johnson, David Duvenaud, Alexander~B. Wiltschko, Ryan~P. Adams, and
  Sandeep~R. Datta.
\newblock Composing graphical models with neural networks for structured
  representations and fast inference.
\newblock In \emph{Advances in Neural Information Processing Systems}, pages
  2946--2954, 2016.

\bibitem[Jordan et~al.(1999)Jordan, Ghahramani, Jaakkola, and Saul]{Jordan99}
Michael~I. Jordan, Zoubin Ghahramani, Tommi~S. Jaakkola, and Lawrence~K. Saul.
\newblock An introduction to variational methods for graphical models.
\newblock \emph{Machine Learning}, 37:\penalty0 183--233, 1999.

\bibitem[Kingma and Welling(2014)]{KingmaW13}
Diederik~P. Kingma and Max Welling.
\newblock Auto-encoding variational bayes.
\newblock In Yoshua Bengio and Yann LeCun, editors, \emph{2nd International
  Conference on Learning Representations, {ICLR} 2014, Banff, AB, Canada, April
  14-16, 2014, Conference Track Proceedings}, 2014.

\bibitem[Knoblauch et~al.(2019)Knoblauch, Jewson, and
  Damoulas]{knoblauch2019generalized}
Jeremias Knoblauch, Jack Jewson, and Theodoros Damoulas.
\newblock Generalized variational inference: Three arguments for deriving new
  posteriors.
\newblock \emph{arXiv 1904.02063}, 2019.

\bibitem[Li et~al.(2020)Li, Luo, and Qiao]{Li}
Jian Li, Xuanyuan Luo, and Mingda Qiao.
\newblock On generalization error bounds of noisy gradient methods for
  non-convex learning.
\newblock In \emph{International Conference on Learning Representations}, 2020.

\bibitem[Lim and Teh(2007)]{LimTeh2007}
Yew~J. Lim and Yee~W. Teh.
\newblock Variational {B}ayesian approach to movie rating prediction.
\newblock In \emph{Proceedings of KDD Cup and Workshop}, 2007.

\bibitem[Masegosa(2020)]{Masegosa20}
Andr{\'{e}}s~R. Masegosa.
\newblock Learning under model misspecification: Applications to variational
  and ensemble methods.
\newblock In \emph{Advances in Neural Information Processing Systems}, 2020.

\bibitem[Morningstar et~al.(2022)Morningstar, Alemi, and Dillon]{pac-m}
Warren~R. Morningstar, Alex Alemi, and Joshua~V. Dillon.
\newblock Pacm-bayes: Narrowing the empirical risk gap in the misspecified
  bayesian regime.
\newblock In \emph{Proceedings of International Conference on Artificial
  Intelligence and Statistics}, pages 8270--8298, 2022.

\bibitem[Negrea et~al.(2019)Negrea, Haghifam, Dziugaite, Khisti, and
  Roy]{NegreaHDK019}
Jeffrey Negrea, Mahdi Haghifam, Gintare~Karolina Dziugaite, Ashish Khisti, and
  Daniel~M. Roy.
\newblock Information-theoretic generalization bounds for {SGLD} via
  data-dependent estimates.
\newblock In \emph{Advances in Neural Information Processing Systems}, pages
  11013--11023, 2019.

\bibitem[Neu(2021)]{Neu21}
Gergely Neu.
\newblock Information-theoretic generalization bounds for stochastic gradient
  descent.
\newblock In \emph{Conference on Learning Theory, {COLT}}, pages 3526--3545,
  2021.

\bibitem[Olkin and Pukelsheim(1982)]{Gauss-wass}
I.~Olkin and F.~Pukelsheim.
\newblock The distance between two random vectors with given dispersion
  matrices.
\newblock \emph{Linear Algebra and its Applications}, 48:\penalty0 257--263,
  1982.

\bibitem[Opper and Archambeau(2009)]{opper2008variational}
Manfred Opper and C\'{e}dric Archambeau.
\newblock The variational {G}aussian approximation revisited.
\newblock \emph{Neural Computation}, pages 786--792, 2009.

\bibitem[Rezende et~al.(2014)Rezende, Mohamed, and
  Wierstra]{rezende2014stochastic}
Danilo~Jimenez Rezende, Shakir Mohamed, and Daan Wierstra.
\newblock Stochastic backpropagation and approximate inference in deep
  generative models.
\newblock In \emph{ICML}, pages 1278--1286, 2014.

\bibitem[Seeger and Bouchard(2012)]{SeegerB12}
Matthias~W. Seeger and Guillaume Bouchard.
\newblock Fast variational {B}ayesian inference for non-conjugate matrix fa
  ctorization models.
\newblock In \emph{Proceedings of International Conference on Artificial
  Intelligence and Statistics}, pages 1012--1018, 2012.

\bibitem[Shalaeva et~al.(2020)Shalaeva, Fakhrizadeh~Esfahani, Germain, and
  Petreczky]{Germain2019}
Vera Shalaeva, Alireza Fakhrizadeh~Esfahani, Pascal Germain, and Mihaly
  Petreczky.
\newblock Improved pac-bayesian bounds for linear regression.
\newblock \emph{Proceedings of the AAAI Conference on Artificial Intelligence},
  pages 5660--5667, 2020.

\bibitem[Shalev-Shwartz and Ben-David(2014)]{pac-bayes-book}
Shai Shalev-Shwartz and Shai Ben-David.
\newblock \emph{Understanding Machine Learning - From Theory to Algorithms.}
\newblock Cambridge University Press, 2014.

\bibitem[Sheth and Khardon(2017)]{Sheth2017}
Rishit Sheth and Roni Khardon.
\newblock Excess risk bounds for the {B}ayes risk using variational inference
  in latent {G}aussian models.
\newblock In \emph{Advances in Neural Information Processing Systems}, pages
  5151--5161, 2017.

\bibitem[Sheth and Khardon(2019)]{Sheth2019}
Rishit Sheth and Roni Khardon.
\newblock Pseudo-bayesian learning via direct loss minimization with
  applications to sparse gaussian process models.
\newblock In \emph{Symposium on Advances in Approximate Bayesian Inference
  (AABI)}, 2019.

\bibitem[Shorten and Khoshgoftaar(2019)]{shorten2019survey}
Connor Shorten and Taghi~M. Khoshgoftaar.
\newblock A survey on image data augmentation for deep learning.
\newblock \emph{Journal of Big Data}, 6\penalty0 (1):\penalty0 1--48, 2019.
\newblock \doi{10.1186/s40537-019-0197-0}.

\bibitem[Tomczak et~al.(2020)Tomczak, Swaroop, and Turner]{lrt}
Marcin Tomczak, Siddharth Swaroop, and Richard Turner.
\newblock Efficient low rank gaussian variational inference for neural
  networks.
\newblock In H.~Larochelle, M.~Ranzato, R.~Hadsell, M.F. Balcan, and H.~Lin,
  editors, \emph{Advances in Neural Information Processing Systems}, pages
  4610--4622, 2020.

\bibitem[Tomczak et~al.(2021)Tomczak, Swaroop, Foong, and
  Turner]{collapsed-elbo}
Marcin~B. Tomczak, Siddharth Swaroop, Andrew Y.~K. Foong, and Richard~E Turner.
\newblock Collapsed variational bounds for bayesian neural networks.
\newblock In \emph{Advances in Neural Information Processing Systems}, 2021.

\bibitem[Viallard et~al.(2023)Viallard, Haddouche, Simsekli, and
  Guedj]{wass-pac-bayes}
Paul Viallard, Maxime Haddouche, Umut Simsekli, and Benjamin Guedj.
\newblock Learning via wasserstein-based high probability generalisation
  bounds.
\newblock In A.~Oh, T.~Naumann, A.~Globerson, K.~Saenko, M.~Hardt, and
  S.~Levine, editors, \emph{Advances in Neural Information Processing Systems},
  volume~36, pages 56108--56137, 2023.

\bibitem[Villani(2008)]{Villani2008OptimalTO}
C{\'e}dric Villani.
\newblock Optimal transport: Old and new.
\newblock 2008.
\newblock URL \url{https://api.semanticscholar.org/CorpusID:118347220}.

\bibitem[Wei and Khardon(2022)]{dlm-bnn}
Yadi Wei and Roni Khardon.
\newblock On the performance of direct loss minimization for bayesian neural
  networks.
\newblock In \emph{I Can't Believe It's Not Better Workshop: Understanding Deep
  Learning Through Empirical Falsification}, 2022.

\bibitem[Wei et~al.(2021)Wei, Sheth, and Khardon]{dlm-sgp}
Yadi Wei, Rishit Sheth, and Roni Khardon.
\newblock Direct loss minimization for sparse gaussian processes.
\newblock In \emph{Proceedings of International Conference on Artificial
  Intelligence and Statistics}, pages 2566--2574, 2021.

\bibitem[Wenzel et~al.(2020)Wenzel, Roth, Veeling, Swiatkowski, Tran, Mandt,
  Snoek, Salimans, Jenatton, and Nowozin]{cold-posterior}
Florian Wenzel, Kevin Roth, Bastiaan Veeling, Jakub Swiatkowski, Linh Tran,
  Stephan Mandt, Jasper Snoek, Tim Salimans, Rodolphe Jenatton, and Sebastian
  Nowozin.
\newblock How good is the {B}ayes posterior in deep neural networks really?
\newblock In \emph{Proceedings of the 37th International Conference on Machine
  Learning}, pages 10248--10259, 2020.

\bibitem[Wu et~al.(2019)Wu, Nowozin, Meeds, Turner, Hernandez-Lobato, and
  Gaunt]{DVI}
Anqi Wu, Sebastian Nowozin, Edward Meeds, Richard~E. Turner, Jose~Miguel
  Hernandez-Lobato, and Alexander~L. Gaunt.
\newblock Deterministic variational inference for robust bayesian neural
  networks.
\newblock In \emph{International Conference on Learning Representations}, 2019.

\bibitem[Zhang et~al.(2017)Zhang, Bengio, Hardt, Recht, and
  Vinyals]{ZhangBHRV17}
Chiyuan Zhang, Samy Bengio, Moritz Hardt, Benjamin Recht, and Oriol Vinyals.
\newblock Understanding deep learning requires rethinking generalization.
\newblock In \emph{5th International Conference on Learning Representations,
  {ICLR}}, 2017.

\bibitem[Zhou et~al.(2019)Zhou, Liang, and Zhang]{SGD-stability2}
Yi~Zhou, Yingbin Liang, and Huishuai Zhang.
\newblock Understanding generalization error of {SGD} in nonconvex
  optimization.
\newblock \emph{Machine Learning}, 111:\penalty0 345 -- 375, 2019.

\end{thebibliography}
\bibliographystyle{plainnat}

\appendix
\newpage
\onecolumn

\section{An explanation why the Proof of \citet{Li} does not Apply to Variational Inference}
\label{sec:Li-proof-not-for-vi}

We begin by summarizing the approach taken by \citet{Li} to establish a stability-based generalization bound. Starting from a theorem similar to \cref{thm:errgen}, \citet{Li} bounds the generalization error using the term $2C\E_{S, \Bar{S}} [\tv(Q_S, Q_{\Bar{S}})]$, where $Q_S$ and $Q_{\Bar{S}}$ represent the output distributions of the algorithm $\mathcal{A}$ on datasets $S$ and $\Bar{S}$ after $T$ optimization steps.

Stochastic Gradient Langevin Dynamics (SGLD) updates the parameters by adding isotropic Gaussian noise at each step:
\begin{align*}
    W_t \leftarrow W_{t-1} - \gamma_t g_t(W_{t-1}, B_t) + \sigma_t \mathcal{N}(0, I_d),
\end{align*}
where $g_t(W_{t-1}, B_t)$ denotes the gradient computed on batch $B_t$ at time $t$.

At each step, the distribution of $W_t$ given $W_{t-1}$ is a mixture of Gaussians:
\begin{align*}
    \frac{1}{|\mathcal{B}|} \sum_{B_t \in \mathcal{B}} \mathcal{N}(W_{t-1} - \gamma_t g_t(W_{t-1}, B_T), \sigma_t^2 I).
\end{align*}
Similarly, the distribution of $\bar{W}_t$ on $\Bar{S}$ is
\begin{align*}
    \frac{1}{|\mathcal{B}|} \sum_{B_t \in \mathcal{B}} \mathcal{N}(\Bar{W}_{t-1} - \gamma_t g_t(W_{t-1}, B_t), \sigma_t^2 I).
\end{align*}
This leads to the bound:
\begin{align*}
    \E_{S, \bar{S}}[\tv(Q_S, Q_{\bar{S}})] = \E_{S, \bar{S}}[\tv(W_T, \bar{W}_T)] \leq \E_{S, \bar{S}}\left[\sqrt{\frac{1}{2} \kl(W_T, \bar{W}_T)} \right].
\end{align*}

Applying the chain rule for KL divergence,
\begin{align}
    \kl(W_T, \bar{W}_T) &\leq \kl(W_{1:T}, \bar{W}_{1:T}) \\
    &= \sum_{t=1}^T \E_{w_{1:t-1} \sim W_{1:{t-1}}} \left[\kl(W_t | W_{1:{t-1}} = w_{1:{t-1}}, \bar{W}_t | \bar{W}_{1:{t-1}} = w_{1:{t-1}}) \right]. \label{ineq:kl-chain}
\end{align}
\citet{Li} further bounds the sum of conditional KL divergences using a factor dependent on the difference in gradients evaluated on samples $z$ and $\bar{z}$. As discussed in the main paper, this analysis requires a fixed variance term and is non-obvious to generalize. 

More importantly, 
this strategy does not extend to Variational Inference (VI). 
Recall that for VI at step $t$, the distribution of $W_t$ given past history is parameterized by $\theta_t$.
Hence the evolution of the random variables is over $\theta$ variables and the distribution over $W$ is induced,
i.e., we have 
$(\theta_{t-1}\rightarrow \theta_t)$, $(\theta_{t}\rightarrow W_{t})$, and $(\theta_{t-1}\rightarrow W_{t-1})$.
Due to this structure, the equation which corresponds to 
\cref{ineq:kl-chain} fails when conditioning on $\theta_{1:{t-1}}$ instead of $w_{1:t-1}$.

A counterexample illustrates this failure: let $\theta_1 = 0.4$ and $\bar{\theta}_1 = 0.6$, with $W_1 \sim \text{Bern}(\theta_1)$ and $\bar{W}_1 \sim \text{Bern}(\bar{\theta}_1)$. The parameter updates follow:
\begin{align}
    \theta_2 &= \theta_1 - 0.2, \quad W_2 | \theta_1 \sim \text{Bern}(\theta_1 - 0.2), \label{eq:update1} \\
    \bar{\theta}_2 &= \bar{\theta}_1 - 0.2, \quad \bar{W}_2 | \bar{\theta}_1 \sim \text{Bern}(\bar{\theta}_1 - 0.2). \label{eq:update2}
\end{align}

Then,
\begin{align}
    \kl((W_1, W_2), (\bar{W}_1, \bar{W}_2)) &\approx 0.173, \\
    \kl(W_1, \bar{W}_1) + \E_{\rho \sim \theta_1}[\kl(W_2 | \theta_1 = \rho, \bar{W}_2 | \bar{\theta}_1 = \rho)] &= \kl(W_1, \bar{W}_1) \approx 0.081.
\end{align}
The last equality holds because
$\kl(W_2 | \theta_1 = \rho, \bar{W}_2 | \bar{\theta}_1 = \rho) = 0$ due to the update rules in \cref{eq:update1} and \cref{eq:update2}.
We therefore see that the left-hand side of \cref{ineq:kl-chain} exceeds the right-hand side when conditioning on $\theta_{1:{t-1}}$. This shows that the method of \citet{Li}, using \cref{ineq:kl-chain}, cannot be used for VI.

\section{Omitted Proofs}

\label{sec:proof-bound}

\begin{proof}[Proof of \cref{cor:logT}]
    If $\nabla F(\theta, S, \epsilon)$ is $L$-Lipschitz, then the update rule $G(\theta, S, \epsilon)$ with learning rate $\alpha$ is $(1 + \alpha L)$-expansive:
    \begin{align}
        \lVert G(\theta, S, \epsilon) - G(\theta', S, \epsilon)  \rVert &= \lVert (\theta - \alpha \nabla_\theta F(\theta, S, \epsilon)) - (\theta' - \alpha \nabla_\theta F(\theta', S, \epsilon))\rVert \\
        &\leq \lVert \theta - \theta' \rVert + \alpha \lVert \nabla_\theta F(\theta, S, \epsilon) - \nabla_\theta F(\theta', S, \epsilon) \rVert \\
        &\leq (1 + \alpha L) \lVert \theta - \theta' \rVert.
    \end{align}
    Thus, $\eta_t = 1 + \alpha_t L$. Then
    \begin{align}
        \prod_{i=t+1}^T (1 + \alpha_i L) &\leq \prod_{i=t+1}^T \exp(\alpha_i L) \\
        &= \exp\left(cL\sum_{i=t+1}^T \frac{1}{(i+2) \log (i+2)} \right) \\
        \label{eq:log-int}
        &\leq \exp \left( cL \int_{t+2}^{T+1} \frac{1}{x\log x} dx \right) \\
        &\leq \exp \left(cL (\log \log (T+1) - \log \log (t+2))  \right) \\
        &\leq \frac{\log (T+1)}{\log (t+2)}. \label{eq:logT1}
    \end{align}
    \cref{eq:log-int} is because of the monotonicity of the function $f(x)=\frac{1}{x \log x}$, $\int_{t}^{t+1} \frac{1}{x \log x} dx \geq \frac{1}{(t+1) \log (t+1)}$. 
    In \eqref{eq:intStep} below, we apply the same observation to the function $g(x) = \frac{1}{x \log^2 x}$.
    Using \eqref{eq:logT1} in the bound of \cref{eq:diff} we obtain:
    \begin{align}
        \E\lVert \theta_{T} - \Bar{\theta}_T \rVert &\leq \frac{2\beta \log (T+1)}{n} \sum_{t=1}^T \frac{c}{(t+2) \log^2 (t+2)} \\
        &\leq \frac{2c\beta \log (T+1)}{n} \int_{t=2}^{T+1} \frac{1}{t\log^2 t} dt \label{eq:intStep}\\
        &= \frac{2c\beta \log (T+1)}{n} \left( -\frac{1}{\log t} \Big\vert_{t=2}^T \right) \\
        &\leq \frac{2c\beta \log (T+1)}{n \log 2} = O \left(\frac{\log T}{n} \right).
    \end{align}
\end{proof}


\section{An Extreme Scenario for PAC-Bayes Bounds}
\label{app:pacbayes}

The following example illustrates a situation where the PAC-Bayes bound deteriorates due to its inclusion of the KL term,
and where the stability bound is tight. This complements the qualitative comparison of the bounds in the main paper to illlustrate their strengths and weaknesses. 

Consider a simple logistic regression scenario where the data takes on two possible values, \( x \in \{-1, 1\} \), and the corresponding labels are \( y \in \{0, 1\} \), i.e., there are only two possible examples $(x=-1, y=0)$ and $(x=1, y=1)$. 
The dataset can contain duplicate elements. The log-likelihood in this case is given by:
\begin{align}
    \log p(y \mid w, x) &= -y \log (1 + \exp{(-wx)}) \\
    & \quad - (1-y) \log (1 + \exp(wx)).
\end{align}

Assume we use a Bayesian approach to learn this model, with \( q(w) = \mathcal{N}(m, \sigma^2) \). For simplicity, we assume \(\sigma^2\) is fixed. Recall that for any objective function, we can always evaluate PAC-Bayes bounds.

Suppose our objective is \( F(m, (x, y)) = \mathbb{E}_{q(w)}[-\log p(y \mid w, x)] \). Considering the gradient with respect to \(m\), we have the following identity 
\citep{rezende2014stochastic, opper2008variational}:
\begin{align}
    \nabla_m F(m, (x, y)) &= \mathbb{E}_{q(w)} [\nabla_w \log p(y \mid w, x)].
\end{align}

Observe that:
\begin{align*}
    \nabla_w -\log p(y=1 \mid w, x=1)
    =& \nabla_w \log (1+\exp(-w)) \\
    =& -\frac{\exp{(-w)}}{1+\exp(-w)}, \\
    \nabla_w -\log p(y=0 \mid w, x=-1) 
    =& \nabla_w \log (1+\exp(-w)) \\
    =& -\frac{\exp(-w)}{1+\exp(-w)}, 
\end{align*}
we can see that
\begin{align}
    &\nabla_w -\log p(y=1 \mid w, x=1) \nonumber \\
    = &\nabla_w -\log p(y=0 \mid w, x=-1) < 0. 
    \label{eq:same}
\end{align}
Therefore,
if we run stochastic gradient descent with a constant learning rate for sufficiently many steps, we reach a solution where \(m \rightarrow +\infty\).

Now, suppose the initial prior is \( P_0 = \mathcal{N}(0, \sigma^2) \). The KL divergence will eventually become:
\begin{align}
    \kl(\mathcal{N}(m, \sigma^2) \parallel \mathcal{N}(0, \sigma^2)) &= \frac{m^2}{2\sigma^2} \rightarrow +\infty.
\end{align}

However, if we consider the stability bound, which is based on the gradient difference, the situation changes. It’s clear that if \( z = \bar{z} \) (whether \( x = \bar{x} = 1, y = \bar{y} = 1 \) or \( x = \bar{x} = -1, y = \bar{y} = 0 \)), the gradient difference will be zero. Thus, we only need to consider the case where \( z = (1, 1) \) and \( \bar{z} = (-1, 0) \). As shown in \cref{eq:same}, the gradients are the same in this scenario as well.

Therefore, using the stability bound, the generalization error will be zero. In contrast, the PAC-Bayes bound gives a value of \( \infty \).



\end{document}